\pgfplotsset{compat=1.14}
\DeclareMathOperator{\bias}{Bias}
\newcommand*\dif{\mathop{}\!\mathrm{d}}
\pgfplotsset{
colormap={cool}{rgb255(0cm)=(205, 230, 249); rgb255(1cm)=(0,128,255); rgb255(2cm)=(255,0,255)}
}
\newcommand{\daggerthanks}[1]{
\let\oldthefootnote=\thefootnote
\setcounter{footnote}{1}
\renewcommand{\thefootnote}{\fnsymbol{footnote}}
\thanks{#1}
\let\thefootnote=\oldthefootnote
}
\renewcommand*{\@fnsymbol}[1]{\ensuremath{\ifcase#1\or \star\or \dagger\or \ddagger\or
    \mathsection\or \mathparagraph\or \|\or \star\star\or \dagger\dagger
    \or \ddagger\ddagger \else\@ctrerr\fi}}
\begin{document}
\title{The Futility of Bias-Free Learning and Search\thanks{Supported, in part, by a grant from the Walter Bradley Center for Natural and Artificial Intelligence.}}

%
%

\author{George D.\ Monta\~nez\orcidID{0000-0002-1333-4611} \and Jonathan Hayase\orcidID{0000-0002-3757-6586}\thanks{denotes equal contribution.} \and
Julius Lauw\orcidID{0000-0003-4201-0664}$^{\dagger}$ \and Dominique Macias\orcidID{0000-0002-6506-4094}$^{\dagger}$ \and Akshay Trikha\orcidID{0000-0001-8207-6399}$^{\dagger}$ \and Julia Vendemiatti\orcidID{0000-0002-6547-9601}$^{\dagger}$}

\authorrunning{G. Monta\~nez et al.}
%
\institute{AMISTAD Lab, Harvey Mudd College, Claremont CA 91711, USA\\
\email{\{gmontanez,jhayase,julauw,dmacias,atrikha,jvendemiatti\}@hmc.edu}}
\maketitle              
\begin{abstract}
Building on the view of machine learning as search, we demonstrate the necessity of bias in learning, quantifying the role of bias (measured relative to a collection of possible datasets, or more generally, information resources) in increasing the probability of success. For a given degree of bias towards a fixed target, we show that the proportion of favorable information resources is strictly bounded from above. Furthermore, we demonstrate that bias is a conserved quantity, such that no algorithm can be favorably biased towards many distinct targets simultaneously. Thus bias encodes trade-offs. The probability of success for a task can also be measured geometrically, as the angle of agreement between what holds for the actual task and what is assumed by the algorithm, represented in its bias. Lastly, finding a favorably biasing distribution over a fixed set of information resources is provably difficult, unless the set of resources itself is already favorable with respect to the given task and algorithm.

\keywords{Machine learning \and Inductive bias \and Algorithmic search}
\end{abstract}

\section{Introduction}

Imagine you are on a routine grocery shopping trip and plan to buy some bananas. You know that the store carries both good and bad bananas which you must search through. There are multiple ways you can go about your search. One way is to randomly pick any ten bananas available on the shelf, which can be regarded as a form of unbiased search. Alternatively, you could introduce some bias to your search by only picking those bananas that are neither underripe nor overripe. Based on your past experiences from eating bananas, there is a better chance that these bananas will taste better. The proportion of good bananas retrieved in your biased search is greater than the same proportion in an unbiased search; you used your prior knowledge about tasty bananas. This common routine shows how bias enables us to conduct more successful searches based on prior knowledge of the search target.

Viewing these decision-making processes through the lens of machine learning, we analyze how algorithms tackle learning problems under the influence of bias. Will we be better off without the existence of bias in machine learning algorithms? Our goal in this paper is to formally characterize the direct relationship between the performance of machine learning algorithms and their underlying biases. Without bias, machine learning algorithms will not perform better than uniform random sampling, on average. Yet to the extent an algorithm is biased toward some target is the extent to which it is biased against all remaining targets. As a consequence, no algorithm can be biased towards all targets. Therefore, bias represents the trade-offs an algorithm makes in how to respond to data. 

We approach this problem by analyzing the performance of search algorithms within the algorithmic search framework introduced by Monta\~nez~\cite{montanez2017fof}. This framework applies to common machine learning tasks such as classification, regression, clustering, optimization, reinforcement learning, and the general machine learning problems considered in Vapnik's learning framework \cite{montanez2017dissertation}. We derive results characterizing the role of bias in successful search, extending Famine of Forte results~\cite{montanez2017fof} for a fixed search target and varying information resources. Our results for bias-free search then directly apply to bias-free learning, showing the extent to which bias is necessary for successful learning and quantifying how difficult it is to find a distribution with favorable bias for a particular target.

\section{Related Work}

 Schaffer's seminal work \cite{schaffer1994conservation} showed that generalization performance for classification problems is a conserved quantity, such that favorable performance on a particular subset of problems will always be offset and balanced by poor performance over the remaining problems. Similarly, we show that bias is also a conserved quantity for any set of information resources. While Schaffer studied the performance of a single algorithm over different learning classes, Wolpert and Macready's ``No Free Lunch Theorems for Optimization" \cite{wolpert1997nfl} established that all optimization algorithms have the same performance when uniformly averaged over all possible cost functions. They also provided a geometric intuition for this result by defining an inner product which measures the alignment between an algorithm and a given prior over problems. This shows that no algorithm can be simultaneously aligned with all possible priors. In the context of the search framework, we define the geometric divergence as a measure of alignment between a search algorithm and a target in order to bound the proportion of favorable search problems. 
 
 While No Free Lunch Theorems are widely recognized as landmark ideas in machine learning, McDermott claims that No Free Lunch results are often misinterpreted and are practically insignificant for many real-world problems~\cite{McDermott2019}. This is because algorithms are commonly tailored to a specific subset of problems in the real world, but No Free Lunch requires that we consider the set of all problems that are closed under permutation. These arguments towards the impracticality of No Free Lunch results are less relevant to our work here, since we evaluate the proportion of successful problems instead of considering the mean performance over the set of all problems. As such, our results are also applicable to sets of problems that are not closed under permutation, as a generalization of No Free Lunch results.

 In ``The Famine of Forte: Few Search Problems Greatly Favor Your Algorithm", Monta\~nez~\cite{montanez2017fof} reduces machine learning problems to search problems and develops a rigorous search framework to generalize No Free Lunch ideas. He strictly bounds the proportion of problems that are favorable for a fixed algorithm and shows that no single algorithm can perform well over a large fraction of search problems. Extending these results to fixed search targets, we show that there are also strict bounds on the proportion of favorable information resources, and that the bound relaxes with the introduction of bias.

Our notion of bias developed here relates to ideas introduced by Mitchell~\cite{needforbiases}. According to Mitchell, a completely unbiased classification algorithm cannot generalize beyond training data. He argued that the ability of a learning algorithm to generalize depends on incorporating biases, which means making assumptions beyond strict consistency with training data. These biases may include prior knowledge of the domain, preferences for simplicity, and awareness of the algorithm's real-world application. We strengthen Mitchell's argument with a mathematical justification for the need for bias in improving learning performance.

G\"{u}l\c{c}ehre and Bengio empirically support Mitchell's ideas by investigating the nature of training barriers affecting the generalization performance of black-box machine learning algorithms~\cite{priorinfo}. Using the Structured Multi-Layer Perceptron (SMLP) neural network architecture, they showed that pre-training the SMLP with hints based on prior knowledge of the task generalizes more efficiently as compared to an SMLP pre-trained with random initializers. Furthermore, Ulyanov et al.\ explore the success of deep convolutional networks applied to image generation and restoration~\cite{deepimagepriors}. By applying untrained convolutional networks to image reconstruction with competitive success to trained ones, they show that the impressive performance of these networks is not due to learning alone. They highlight the importance of inductive bias, which is built into the structure of these generator networks, in achieving this high level of success. In a similar vein, Runarsson and Yao establish that bias is an essential component in constrained evolutionary optimization search problems~\cite{searchbias}. It is experimentally shown that carefully selecting an appropriate constraint handling method and applying a biasing penalty function enhances the probability of locating feasible solutions for evolutionary algorithms. Inspired by the results obtained from these experimental studies, we formulate a theoretical validation of the role of bias in generalization performance for learning problems.

\section{The Search Framework}
    \subsection{The Search Problem}
    We formulate machine learning problems as search problems using the algorithmic search framework \cite{montanez2017fof}. Within the framework, a search problem is represented as a 3-tuple $(\mathrm{\Omega}, T, F)$. The finite search space from which we can sample is $\mathrm{\Omega}$. The subset of elements in the search space that we are searching for is the target set $T$. A target function that represents $T$ is an $|\mathrm{\Omega}|$-length vector with entries having value 1 when the corresponding elements of $\mathrm{\Omega}$ are in the target set and 0 otherwise. The external information resource $F$ is a binary string that provides initialization information for the search and evaluates points in $\mathrm{\Omega}$, acting as an oracle that guides the search process. 
    
    \subsection{The Search Algorithm}
    Given a search problem, a history of elements already examined, and information resource evaluations, an algorithmic search is a process that decides how to query elements of $\mathrm{\Omega}$. As the search algorithm samples, it adds the record of points queried and information resource evaluations, indexed by time, to the search history. If the algorithm queries an element $\omega \in T$ at least once during the course of its search, we say that the search is successful. Figure \ref{fig:jellyfish} visualizes the search algorithm. 
    
    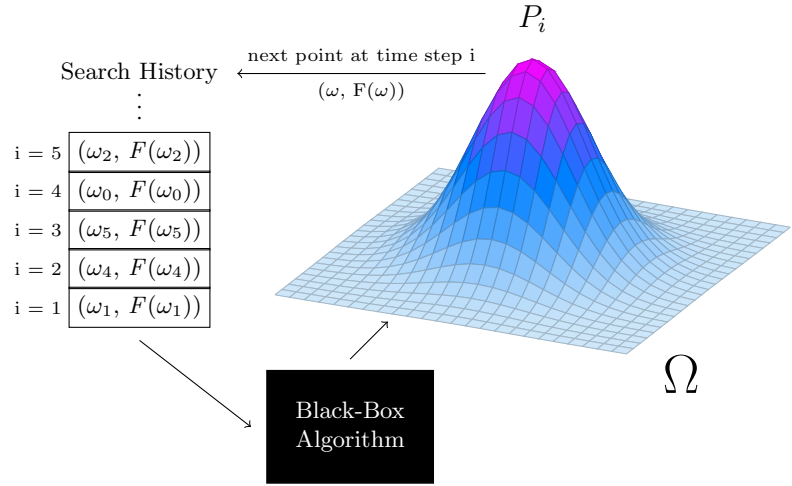
\begin{figure}
        \centering
\def\centerx{2}
\def\centery{-1}
\begin{tikzpicture}
    \begin{axis}[hide axis]
    \addplot3[surf, domain=-2:6,domain y=-5:3] 
        {exp(-( (x-\centerx)^2 + (y-\centery)^2)/3 )};
    \node[text centered] at (axis cs:\centerx, \centery, 1.20) {\large $P_i$};
    \end{axis}
    \node at (5.4,0) {\huge{$\mathrm{\Omega}$}};
    \draw[->] (\centerx + 0.8,4) -- node[above, text centered] {\scriptsize next point at time step i} node[below, text centered] {\scriptsize ($\omega$, F($\omega$))} (-0.5,4);
    \node[draw, fill=black!, text=white, text width=2cm, minimum height=1.5cm, text centered] at (1,-0.7) {Black-Box Algorithm};
    \draw[->] (1, 0.2) -- (1.5, 0.7);
    \node[minimum width=1.65cm, text centered] at (-1.8,4.01) 
        {\small Search History}; 
     \foreach \y in {3.4,3.55, 3.7} 
        \node[ minimum width=2cm, minimum height=0.3cm, text centered] at (-1.8,\y) {$\cdot$};
    \node[draw, minimum width=1.65cm, text centered] at (-1.8,2.95) 
        {\footnotesize ($\omega_2$, \textit{F}($\omega_2$))}; 
    \node[text centered] at (-3.15,2.95) {\scriptsize i = 5};
    \node[draw, minimum width=1.65cm, text centered] at (-1.8,2.435) 
        {\footnotesize ($\omega_0$, \textit{F}($\omega_0$))};
    \node[text centered] at (-3.15,2.435) {\scriptsize i = 4};
    \node[draw, minimum width=1.65cm, text centered] at (-1.8,1.92) 
        {\footnotesize ($\omega_5$, \textit{F}($\omega_5$))};
    \node[text centered] at (-3.15,1.92) {\scriptsize i = 3};
    \node[draw, minimum width=1.65cm, text centered] at (-1.8,1.405) 
        {\footnotesize ($\omega_4$, \textit{F}($\omega_4$))};
    \node[text centered] at (-3.15,1.405) {\scriptsize i = 2};    
    \node[draw, minimum width=1.65cm, text centered] at (-1.8,0.89) 
        {\footnotesize ($\omega_1$, \textit{F}($\omega_1$))};
    \node[text centered] at (-3.15,0.89) {\scriptsize i = 1};
    \draw[->] (-1.8, 0.45) -- (-0.32, -0.7);
\end{tikzpicture}
        \caption{As a black-box optimization algorithm samples from $\mathrm{\Omega}$, it produces an  associated probability distribution $P_i$ based on the search history. When a sample $\omega_k$ corresponding to location $k$ in $\mathrm{\Omega}$ is evaluated using the external information resource $F$, the tuple ($\omega_k$, $F(\omega_k)$) is added to the search history.}
        \label{fig:jellyfish}
    \end{figure}
    
    \subsection{Measuring Performance}
    Within this search framework, we measure a learning algorithm's performance by examining the expected per-query probability of success. This measure is more effective than measuring an algorithm's total probability of success, since the number of sampling steps may vary depending on the algorithm used. Furthermore, the per query probability of success naturally accounts for sampling procedures that may involve repeatedly sampling the same points in the search space, as is the case of genetic algorithms \cite{goldberg1999genetic,reeves2002genetic}. Thus, this measure effectively handles search algorithms that balance exploration and exploitation. 
    
    The expected per-query probability of success is defined as
    \[ q(T,F) = \mathbb{E}_{\tilde{P}, H} \Bigg[ \frac{1}{|\tilde{P}|} \sum_{i=1}^{|\tilde{P}|} P_i(\omega \in T) \Bigg| F \Bigg] \]
    where $\tilde{P}$ is a sequence of probability distributions over the search space (where each timestep \(i\) produces a distribution $P_i$), \(T\) is the target, \(F\) is the information resource, and \(H\) is the search history. The number of queries during a search is equal to the length of the probability distribution sequence,  $|\tilde{P}|$.
    
\section{Main Results}
    We present and explain our main results in this section. Note that full proofs for the following results can be found in the Appendix. We proceed by defining our measures of bias and target divergence, then show conservation results of bias and give bounds on the probability of successful search and the proportion of favorable search problems given a fixed target. 
    \begin{definition}
        \label{def:bias_D}
        (Bias between a distribution over information resources and a fixed target) Let $\mathcal{D}$ be a distribution over a  space of information resources $\mathcal{F}$ and let $F \sim \mathcal{D}$. For a given  $\mathcal{D}$ and a fixed $k$-hot target function \(\bm{t}\),
        \begin{align*}
            \bias(\mathcal{D}, \bm{t}) 
            &= \mathbb{E}_{\mathcal{D}} \left[\bm{t}^\top \overline{P}_{F}\right] - \frac{k}{|\Omega|} \\
            &= \bm{t}^\top \mathbb{E}_{\mathcal{D}}\left[\,\overline{P}_{F}\right] -    \frac{\|\bm{t}\|^2}{|\Omega|} \\
            &= \bm{t}^\top  \int_{\mathcal{F}} \overline{P}_{f} \mathcal{D}(f) \dif f - \frac{\|\bm{t}\|^2}{|\Omega|}
        \end{align*}
        where $\overline{P}_{f}$ is the vector representation of the averaged probability distribution (conditioned on $f$) induced on $\Omega$ during the course of the search, which can be shown to imply $q(t,f) = \bm{t}^\top \overline{P}_{f}$.
    \end{definition}

    \begin{definition}
        \label{def:bias_B}
        (Bias between a finite set of information resources and a fixed target) Let $\mathcal{U}[\mathcal{B}]$ denote a uniform distribution over a finite set of information resources \(\mathcal{B}\). For a random quantity $F \sim \mathcal{U}[\mathcal{B}]$, the averaged \(|\Omega|\)-length simplex vector $\overline{P}_{F}$, and a fixed $k$-hot target function \(\bm{t}\), 
        \begin{align*}
            \bias(\mathcal{B}, \bm{t}) 
            &= \mathbb{E}_{\mathcal{U}[\mathcal{B}]}[\bm{t}^\top \overline{P}_{F}] - \frac{k}{|\Omega|} \\
            &= \bm{t}^\top \mathbb{E}_{\mathcal{U}[\mathcal{B}]}[\overline{P}_{F}] - \frac{k}{|\Omega|} \\
            &= \bm{t}^\top \left( \frac{1}{|\mathcal{B}|}\sum_{f \in \mathcal{B}} \overline{P}_{f} \right) - \frac{\|\bm{t}\|^{2}}{|\Omega|}.
        \end{align*}
    \end{definition}
    We define bias as the difference between average performance of a search algorithm on a fixed target over a set of information resources and the baseline search performance for the case of uniform random sampling. Definition \ref{def:bias_D} is a generalized form of Definition \ref{def:bias_B}, characterizing the alignment between a target function and a distribution over information resources instead of a fixed set.
    \begin{definition}
        (Target Divergence) The measure of similarity between a fixed target function \textbf{t} and the expected value of the averaged \(|\Omega|\)-length simplex vector $\overline{P}_{F}$, where $F\sim \mathcal{D}$, is defined as
        \[\theta = \arccos \left ( \frac{\bm{t}^{\top} \mathbb{E}_{\mathcal{D}}[\overline{P}_{F}]}{\|\bm{t}\| \|\mathbb{E}_{\mathcal{D}}[\overline{P}_{F}]\|} \right)\]
    \end{definition}
     Similar to Wolpert and Macready's geometric interpretation of the No Free Lunch theorems in \cite{wolpert1997nfl}, we can evaluate how far a target function $\bm{t}$ deviates from the averaged probability simplex vector $\overline{P}_{f}$ for a given search problem. In this paper, we use cosine similarity to measure the level of similarity between $\bm{t}$ and $\overline{P}_{f}$. Geometrically, the target divergence is the angle between the target vector and the averaged $|\Omega|$-length simplex vector. Figure \ref{fig:tardiv} depicts the target divergence for various levels of alignments between $\bm{t}$ and $\overline{P}_{f}$.
     
\tdplotsetmaincoords{70}{130}
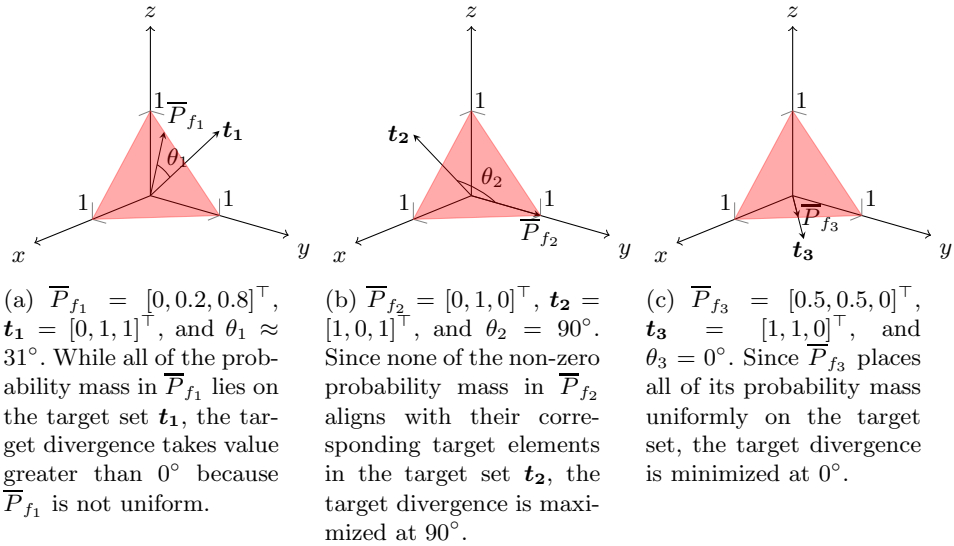
\begin{figure}[t] 
    \centering
    \begin{subfigure}[b]{0.3\textwidth} 
        \begin{tikzpicture}[tdplot_main_coords, scale=1.2]
            \def\laxis{2}
            \def\ltriangle{1}
            \def\ltick{.2}
            
            \coordinate (O) at (0,0,0);
    
            \draw [->] (0,0,0) -- (\laxis,0,0) node [anchor=north east] {$x$};
            \draw [->] (0,0,0) -- (0,\laxis,0) node [anchor=north west] {$y$};
            \draw [->] (0,0,0) -- (0,0,\laxis) node [anchor=south] {$z$};
            \pgfmathtruncatemacro{\nticks}{floor(\laxis)-1}
            \begin{scope}[
            help lines,
            every node/.style={inner sep=1pt,text=black}
            ]
            \foreach \coord in {1,...,\nticks} {
              \draw (\coord,\ltick,0) -- ++(0,-\ltick,0) -- ++(0,0,\ltick)
              node [pos=1,left] {\coord};
              \draw (\ltick,\coord,0) -- ++(-\ltick,0,0) -- ++(0,0,\ltick)
              node [pos=1,right] {\coord};
              \draw (\ltick,0,\coord) -- ++(-\ltick,0,0) -- ++(0,\ltick,0)
              node [at start,above right] {\coord};
            }
            \draw[-stealth,color=black] (0,0,0) -- (0,1,1) node [anchor= west] {$\bm{t_1}$};
            \draw[-stealth,color=black] (0,0,0) -- (0,.2,.8) node [anchor= south west] {$\overline{P}_{f_1}$};
            
            \tdplotsetrotatedcoords{0}{90}{90}
            
            \tdplotdrawarc[tdplot_rotated_coords,color=black]{(0,0,0)}{0.4}{45}{76}{anchor=south west,color=black}{$\theta_1$}
        
            \end{scope}
            \filldraw [opacity=.33,red] (\ltriangle,0,0) -- (0,\ltriangle,0)
            -- (0,0,\ltriangle) -- cycle;
        \end{tikzpicture}
        \caption{$\overline{P}_{f_1} = [0, 0.2,0.8]^{\top}$, $\bm{t_1} = [0,1,1]^{\top}$, and $\theta_1 \approx 31 \degree$. While all of the probability mass in $\overline{P}_{f_1}$ lies on the target set $\bm{t_1}$, the target divergence takes value greater than $0\degree$ because $\overline{P}_{f_1}$ is not uniform. \\}\label{fig:tardiva}
    \end{subfigure} \hfill
    \begin{subfigure}[b]{0.3\textwidth}
        \begin{tikzpicture}[tdplot_main_coords, scale=1.2]
            \def\laxis{2}
            \def\ltriangle{1}
            \def\ltick{.2}
            
            \coordinate (O) at (0,0,0);
    
            \draw [->] (0,0,0) -- (\laxis,0,0) node [anchor=north east] {$x$};
            \draw [->] (0,0,0) -- (0,\laxis,0) node [anchor=north west] {$y$};
            \draw [->] (0,0,0) -- (0,0,\laxis) node [anchor=south] {$z$};
            \pgfmathtruncatemacro{\nticks}{floor(\laxis)-1}
            \begin{scope}[
            help lines,
            every node/.style={inner sep=1pt,text=black}
            ]
            \foreach \coord in {1,...,\nticks} {
              \draw (\coord,\ltick,0) -- ++(0,-\ltick,0) -- ++(0,0,\ltick)
              node [pos=1,left] {\coord};
              \draw (\ltick,\coord,0) -- ++(-\ltick,0,0) -- ++(0,0,\ltick)
              node [pos=1,right] {\coord};
              \draw (\ltick,0,\coord) -- ++(-\ltick,0,0) -- ++(0,\ltick,0)
              node [at start,above right] {\coord};
            }
            \draw[-stealth,color=black] (0,0,0) -- (1,0,1) node [anchor= east] {$\bm{t_2}$};
            \draw[-stealth,color=black] (0,0,0) -- (0,1,0) node [anchor= north] {$\overline{P}_{f_2}$};
            \tdplotsetrotatedcoords{0}{135}{90}
            \tdplotdrawarc[tdplot_rotated_coords,color=black]{(0,0,0)}{0.35}{00}{90}{anchor=south west,color=black}{$\theta_2$}
            
            \end{scope}
            \filldraw [opacity=.33,red] (\ltriangle,0,0) -- (0,\ltriangle,0)
            -- (0,0,\ltriangle) -- cycle;
        \end{tikzpicture}
        \caption{$\overline{P}_{f_2} =[0,1,0]^{\top} $, $\bm{t_2}= [1,0,1]^{\top}$, and $\theta_2 = 90 \degree$. Since none of the non-zero probability mass in $\overline{P}_{f_2}$ aligns with their corresponding target elements in the target set $\bm{t_2}$, the target divergence is maximized at $90 \degree$.}\label{fig:tardivb}
    \end{subfigure} \hfill
    \begin{subfigure}[b]{0.3\textwidth} 
        \begin{tikzpicture}[tdplot_main_coords, scale=1.2]
            \def\laxis{2}
            \def\ltriangle{1}
            \def\ltick{.2}
            
            \coordinate (O) at (0,0,0);
    
            \draw [->] (0,0,0) -- (\laxis,0,0) node [anchor=north east] {$x$};
            \draw [->] (0,0,0) -- (0,\laxis,0) node [anchor=north west] {$y$};
            \draw [->] (0,0,0) -- (0,0,\laxis) node [anchor=south] {$z$};
            \pgfmathtruncatemacro{\nticks}{floor(\laxis)-1}
            \begin{scope}[
            help lines,
            every node/.style={inner sep=1pt,text=black}
            ]
            \foreach \coord in {1,...,\nticks} {
              \draw (\coord,\ltick,0) -- ++(0,-\ltick,0) -- ++(0,0,\ltick)
              node [pos=1,left] {\coord};
              \draw (\ltick,\coord,0) -- ++(-\ltick,0,0) -- ++(0,0,\ltick)
              node [pos=1,right] {\coord};
              \draw (\ltick,0,\coord) -- ++(-\ltick,0,0) -- ++(0,\ltick,0)
              node [at start,above right] {\coord};
            }
            \draw[-stealth,color=black] (0,0,0) -- (1,1,0) node [anchor= north] {$\bm{t_3}$};
            \draw[-stealth,color=black] (0,0,0) -- (0.5,0.5,0) node [anchor = west] {$\overline{P}_{f_3}$};
            \end{scope}
            \filldraw [opacity=.33,red] (\ltriangle,0,0) -- (0,\ltriangle,0)
            -- (0,0,\ltriangle) -- cycle;
        \end{tikzpicture}
        \caption{$\overline{P}_{f_3} = [0.5,0.5,0]^{\top}$, $\bm{t_3} = [1,1,0]^{\top}$, and $\theta_3 = 0 \degree$. Since $\overline{P}_{f_3}$ places all of its probability mass uniformly on the target set, the target divergence is minimized at $0 \degree$. \\\\} \label{fig:tardivc}  
    \end{subfigure}
    \caption{These examples visualize the target divergence for various possible combinations of target functions and simplex vectors. Figure \ref{fig:tardivb} demonstrates minimum alignment, while Figure \ref{fig:tardivc} demonstrates maximum alignment.}
    \label{fig:tardiv}
\end{figure}    
        
    \begin{restatable}[Improbability of Favorable Information Resources]{theorem}{iofir}
        Let $\mathcal{D}$ be a distribution over a set of information resources $\mathcal{F}$, let $F$ be a random variable such that $F \sim \mathcal{D}$, let $t \subseteq \Omega$ be an arbitrary fixed $k$-sized target set with corresponding target function $\bm{t}$, and let $q(t,F)$ be the expected per-query probability of success for algorithm $\mathcal{A}$ on search problem $(\Omega,t,F)$. Then, for any $q_{\mathrm{min}} \in [0,1]$,
        \begin{align*}
            \Pr(q(t, F) \geq q_\mathrm{min})  &\leq \frac{p + \bias(\mathcal{D}, \bm{t})}{q_{\mathrm{min}}}
        \end{align*}
        where $p = \frac{k}{|\Omega|}$.
        \label{thm:iofir}
    \end{restatable}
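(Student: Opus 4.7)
The plan is to recognize that this bound is essentially a one-line application of Markov's inequality, with the bias definition supplying the expected value. Since $q(t,F)$ is a per-query probability of success, it lies in $[0,1]$ and is in particular non-negative, so Markov's inequality applies directly:
\[
\Pr(q(t,F) \geq q_{\min}) \leq \frac{\mathbb{E}_{\mathcal{D}}[q(t,F)]}{q_{\min}}.
\]

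Next, I would evaluate the numerator using the identity $q(t,f) = \bm{t}^\top \overline{P}_f$, which is explicitly noted at the end of Definition~\ref{def:bias_D}. Taking expectations over $F \sim \mathcal{D}$ and pulling $\bm{t}^\top$ outside the expectation (by linearity) gives $\mathbb{E}_{\mathcal{D}}[q(t,F)] = \bm{t}^\top \mathbb{E}_{\mathcal{D}}[\overline{P}_F]$.

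By Definition~\ref{def:bias_D}, we have $\bias(\mathcal{D}, \bm{t}) = \bm{t}^\top \mathbb{E}_{\mathcal{D}}[\overline{P}_F] - \frac{\|\bm{t}\|^2}{|\Omega|}$, and since $\bm{t}$ is $k$-hot we have $\|\bm{t}\|^2 = k$, so $\bm{t}^\top \mathbb{E}_{\mathcal{D}}[\overline{P}_F] = \bias(\mathcal{D}, \bm{t}) + \frac{k}{|\Omega|} = \bias(\mathcal{D}, \bm{t}) + p$. Substituting back into Markov yields the claimed bound.

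There is essentially no main obstacle here—the argument is three lines once one notices that $q(t,F)$ is non-negative and that the expected value it demands is precisely the quantity encoded by the bias definition. The only subtlety worth flagging in the write-up is the justification for interchanging $\bm{t}^\top$ with $\mathbb{E}_{\mathcal{D}}$, which follows from linearity of expectation applied coordinate-wise (and is already tacitly used in the chain of equalities within Definition~\ref{def:bias_D} itself).
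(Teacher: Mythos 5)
Your proposal is correct and follows essentially the same route as the paper's proof: rewrite $q(t,F)$ as $\bm{t}^{\top}\overline{P}_{F}$ (the paper invokes Lemma~\ref{lem:lemone} for this identity, which is the same fact you cite from the remark in Definition~\ref{def:bias_D}), apply Markov's inequality, and identify the resulting expectation with $p + \bias(\mathcal{D},\bm{t})$ via the definition of bias. The order in which you apply Markov and perform the substitution differs trivially from the paper, but the argument is identical.
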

    \noindent
    Since the size of the target set $t$ is usually small relative to the size of the search space $\mathrm{\Omega}$, $p$ is also usually small. Following the above results, we see that the probability that a search problem with an information resource drawn from $\mathcal{D}$ is favorable is bounded by a low value. This bound tightens as we increase our minimum threshold of success, $q_\mathrm{min}$. Notably, our bound relaxes with the introduction of bias. 
    
    \begin{restatable}[Probability of Success Under Bias-Free Search]{corollary}{reducedprob}
    When \(\bias(\mathcal{D}, \bm{t}) = 0\),
    \begin{align*}
        \Pr(q(t, F) \geq q_\mathrm{min}) &\leq \frac{p}{q_{\mathrm{min}}}
    \end{align*}
    \end{restatable}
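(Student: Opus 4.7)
The plan is to obtain this corollary as an immediate specialization of the preceding Theorem \ref{thm:iofir}. Since the theorem already furnishes the bound
\[\Pr(q(t, F) \geq q_\mathrm{min}) \leq \frac{p + \bias(\mathcal{D}, \bm{t})}{q_{\mathrm{min}}}\]
for any distribution $\mathcal{D}$ over information resources, any fixed $k$-sized target $t$, and any threshold $q_{\mathrm{min}} \in [0,1]$, the corollary is obtained simply by invoking the hypothesis $\bias(\mathcal{D}, \bm{t}) = 0$ and reading off the resulting inequality.

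Concretely, the only step is to substitute the vanishing bias term into the numerator of the theorem's bound. This yields $\Pr(q(t, F) \geq q_{\mathrm{min}}) \leq p/q_{\mathrm{min}}$ directly, with no further manipulation required. There is no hidden obstacle: the corollary exists precisely to highlight the baseline case in which the algorithm's averaged probability mass on the target matches the uniform baseline $k/|\Omega|$, so that the bound reduces to the classical Markov-type bound $p/q_{\mathrm{min}}$ familiar from Famine-of-Forte style arguments.

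The only interpretive point worth flagging in the write-up is the meaning of the resulting inequality: under bias-free search the probability of encountering a favorable information resource is controlled entirely by the ratio of target size to search space size, divided by the required success threshold. This makes explicit that, absent bias, performance is no better on average than uniform random sampling, which is the narrative motivation for stating the corollary separately rather than folding it into Theorem \ref{thm:iofir}.
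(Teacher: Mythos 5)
Your proposal is correct and matches the paper's own proof, which simply states that the corollary follows directly from Theorem~\ref{thm:iofir}; substituting $\bias(\mathcal{D}, \bm{t}) = 0$ into that theorem's bound is all that is required.
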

    \noindent Directly following Theorem~\ref{thm:iofir}, if the algorithm does not induce bias on $\bm{t}$ given a distribution over a set of information resources, the probability of successful search by a favorable information resource cannot be any higher than that of uniform random sampling divided by the minimum performance that we specify. 
    
    \begin{restatable}[Geometric Divergence]{corollary}{geometricdivergence}
    \begin{align*}
              \Pr(q(t, F) \geq q_\mathrm{min}) &\leq \frac{\sqrt{k} \cos (\theta)}{q_{\mathrm{min}}} \\
              &= \frac{\| \bm{t} \| \cos (\theta)}{q_{\mathrm{min}}}
         \end{align*}
    \end{restatable}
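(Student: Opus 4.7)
The plan is to reduce the geometric bound directly to Theorem~\ref{thm:iofir} by rewriting the numerator $p + \bias(\mathcal{D}, \bm{t})$ in terms of the inner product $\bm{t}^{\top} \mathbb{E}_{\mathcal{D}}[\overline{P}_F]$, then invoking the definition of the target divergence $\theta$ to express that inner product geometrically.

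First I would observe that since $\bm{t}$ is a $k$-hot vector, $\|\bm{t}\|^2 = k$, so $p = k/|\Omega| = \|\bm{t}\|^2/|\Omega|$. Substituting the definition of $\bias(\mathcal{D}, \bm{t})$ from Definition~\ref{def:bias_D}, the bias term's $-\|\bm{t}\|^2/|\Omega|$ exactly cancels $p$, leaving
\begin{align*}
    p + \bias(\mathcal{D}, \bm{t}) = \bm{t}^{\top} \mathbb{E}_{\mathcal{D}}[\overline{P}_F].
\end{align*}
Next, the definition of target divergence gives $\bm{t}^{\top} \mathbb{E}_{\mathcal{D}}[\overline{P}_F] = \|\bm{t}\|\, \|\mathbb{E}_{\mathcal{D}}[\overline{P}_F]\|\cos(\theta)$.

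The only nontrivial step is bounding $\|\mathbb{E}_{\mathcal{D}}[\overline{P}_F]\| \leq 1$. Since $\overline{P}_F$ is a probability vector on $\Omega$, its entries are nonnegative and sum to $1$, so $\|\overline{P}_F\|_2^2 \leq \sum_i (\overline{P}_F)_i = 1$, giving $\|\overline{P}_F\| \leq 1$ pointwise. By Jensen's inequality applied to the convex norm function, $\|\mathbb{E}_{\mathcal{D}}[\overline{P}_F]\| \leq \mathbb{E}_{\mathcal{D}}[\|\overline{P}_F\|] \leq 1$. Chaining these inequalities with Theorem~\ref{thm:iofir} yields
\begin{align*}
    \Pr(q(t,F) \geq q_{\mathrm{min}}) \leq \frac{\bm{t}^{\top}\mathbb{E}_{\mathcal{D}}[\overline{P}_F]}{q_{\mathrm{min}}} \leq \frac{\|\bm{t}\|\cos(\theta)}{q_{\mathrm{min}}} = \frac{\sqrt{k}\cos(\theta)}{q_{\mathrm{min}}},
\end{align*}
which is the stated bound.

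The whole argument is essentially algebraic manipulation plus one application of Jensen's inequality, so the main (and only real) obstacle is recognizing that the convex-combination structure of the simplex guarantees $\|\mathbb{E}_{\mathcal{D}}[\overline{P}_F]\| \leq 1$; everything else is substitution using the definitions already established.
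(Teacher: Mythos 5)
Your proposal is correct and follows essentially the same route as the paper's proof: cancel $p$ against the $-\|\bm{t}\|^2/|\Omega|$ term in the bias, rewrite the resulting inner product via the definition of $\theta$, and bound $\|\mathbb{E}_{\mathcal{D}}[\overline{P}_F]\| \leq 1$. The only cosmetic difference is that the paper justifies that norm bound by invoking its lemma that the expectation of simplex vectors is itself a simplex vector (whose Euclidean norm is at most its unit $\ell_1$ norm), whereas you get the same fact from the pointwise bound plus Jensen's inequality.
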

    \noindent
     This result shows that greater geometric alignment between the target vector and expected distribution over the search space loosens the upper bound on the probability of successful search. Connecting this to our other results, the geometric alignment can be viewed as another interpretation of the bias the algorithm places on the target set.
    
    \begin{restatable}[Conservation of Bias]{theorem}{conservation}
    Let $\mathcal{D}$ be a distribution over a set of information resources and let $\tau_{k} = \{ \bm{t} | \bm{t} \in \{ 0, 1 \}^{ |\Omega| }, ||\bm{t}|| = \sqrt{k} \}$ be the set of all $|\Omega|$-length $k$-hot vectors. Then for any fixed algorithm $\mathcal{A}$, 
    \begin{align*}
        \sum_{\bm{t} \in \tau_{k}} \bias(\mathcal{D},\bm{t}) = 0
    \end{align*}
    \label{thm:consbias}
    \end{restatable}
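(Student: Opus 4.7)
The plan is to expand the definition of bias, exchange the order of summation, and then count combinatorially how often each coordinate of $\mathbb{E}_{\mathcal{D}}[\overline{P}_F]$ is selected by the indicator vectors $\bm{t} \in \tau_k$. Writing $\bm{v} := \mathbb{E}_{\mathcal{D}}[\overline{P}_F]$ and $n := |\Omega|$, the definition of bias gives
\begin{align*}
\sum_{\bm{t} \in \tau_k} \bias(\mathcal{D}, \bm{t})
= \sum_{\bm{t} \in \tau_k} \bm{t}^\top \bm{v} \;-\; \sum_{\bm{t} \in \tau_k} \frac{k}{n},
\end{align*}
so it suffices to show that the two sums on the right are equal.

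For the first sum, I would swap the order of summation to obtain $\sum_{i=1}^{n} v_i \sum_{\bm{t} \in \tau_k} t_i$. The inner count is the number of $k$-hot binary vectors of length $n$ with a $1$ in coordinate $i$, which is $\binom{n-1}{k-1}$ by fixing that coordinate and choosing the remaining $k-1$ ones from the other $n-1$ positions. Crucially, $\bm{v}$ is a convex combination of the per-query averaged simplex vectors $\overline{P}_f$, each of which sums to $1$, so $\sum_i v_i = 1$. Hence the first sum collapses to $\binom{n-1}{k-1}$.

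For the second sum, $|\tau_k| = \binom{n}{k}$, so the sum is $\binom{n}{k} \cdot \frac{k}{n}$, and the elementary identity $\binom{n}{k}\cdot\frac{k}{n} = \binom{n-1}{k-1}$ makes the two sums match, giving $0$ overall.

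The main (minor) obstacle is really just justifying that $\mathbb{E}_{\mathcal{D}}[\overline{P}_F]$ is a probability vector on $\Omega$, which follows because each $\overline{P}_f$ lies on the $|\Omega|$-simplex and expectation preserves this property by linearity. Everything else is routine bookkeeping: a swap of summation order, one binomial identity, and a direct cancellation. No structural assumption on $\mathcal{A}$ or $\mathcal{D}$ is needed, which is precisely the content of the ``conservation'' phenomenon — favorable bias on one $k$-sized target must be offset by unfavorable bias on the remaining $k$-sized targets.
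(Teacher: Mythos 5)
Your proof is correct and follows essentially the same route as the paper's: both arguments reduce to the observation that $\sum_{\bm{t}\in\tau_k}\bm{t}=\binom{n-1}{k-1}\mathbf{1}$, that the averaged probability vector sums to $1$, and the identity $\binom{n}{k}\tfrac{k}{n}=\binom{n-1}{k-1}$. The only cosmetic difference is that you move the expectation over $\mathcal{D}$ inside first (invoking that an expectation of simplex vectors is simplex, which the paper isolates as a lemma), whereas the paper keeps $\mathbb{E}_{\mathcal{D}}$ outside until the end.
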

    \noindent
    Since bias is a conserved quantity, an algorithm that is biased towards any particular target is equally biased against other targets, as is the case in Schaffer's conservation law for generalization performance~\cite{schaffer1994conservation}. This conservation property holds regardless of the algorithm or the distribution over information resources. Positive dependence between targets and information resources is the grounds for all successful machine learning~\cite{montanez2017dissertation}, and this conservation result is another manifestation of this general property of learning.
    
    \begin{restatable}[Famine of Favorable Information Resources]{theorem}{fofir}
        Let $\mathcal{B}$ be a finite set of information resources and let $t \subseteq \Omega$ be an arbitrary fixed $k$-size target set with corresponding target function $\bm{t}$. Define 
        \begin{align*}
            \mathcal{B}_{q_{\mathrm{min}}} &= \{f \mid f \in \mathcal{B}, q(t,f) \geq q_{\mathrm{min}} \},
        \end{align*}
        where $q(t,f)$ is the expected per-query probability of success for algorithm $\mathcal{A}$ on search problem $(\Omega, t,f)$ and $q_{\mathrm{min}} \in [0,1]$ represents the minimally acceptable per-query probability of success. Then,
        \begin{align*}
            \frac{|\mathcal{B}_{q_{\mathrm{min}}}|}{|\mathcal{B}|} &\leq \frac{p +  \bias(\mathcal{B}, \bm{t})}{q_{\mathrm{min}}}
        \end{align*}
        where $p = \frac{k}{|\Omega|}$.
        \label{thm:fofir}
    \end{restatable}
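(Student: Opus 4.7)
The plan is to recognize that this statement is the discrete counterpart of Theorem~\ref{thm:iofir} with the distribution $\mathcal{D}$ specialized to the uniform distribution $\mathcal{U}[\mathcal{B}]$, so the entire argument reduces to a Markov-style inequality applied to the nonnegative random variable $q(t,F)$ where $F \sim \mathcal{U}[\mathcal{B}]$.

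First I would rewrite the ratio $|\mathcal{B}_{q_{\mathrm{min}}}|/|\mathcal{B}|$ as a probability. Setting $F \sim \mathcal{U}[\mathcal{B}]$, one has
\[
\frac{|\mathcal{B}_{q_{\mathrm{min}}}|}{|\mathcal{B}|} \;=\; \Pr\bigl(q(t,F) \geq q_{\mathrm{min}}\bigr),
\]
since each $f \in \mathcal{B}$ receives mass $1/|\mathcal{B}|$ and the event $\{q(t,F) \geq q_{\mathrm{min}}\}$ occurs precisely for $f \in \mathcal{B}_{q_{\mathrm{min}}}$.

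Next I would compute the mean of $q(t,F)$ under $\mathcal{U}[\mathcal{B}]$ using the relation $q(t,f) = \bm{t}^\top \overline{P}_f$ recorded in Definition~\ref{def:bias_D} together with Definition~\ref{def:bias_B}:
\[
\mathbb{E}_{\mathcal{U}[\mathcal{B}]}\bigl[q(t,F)\bigr] \;=\; \bm{t}^\top\!\left(\frac{1}{|\mathcal{B}|}\sum_{f \in \mathcal{B}} \overline{P}_{f}\right) \;=\; \frac{\|\bm{t}\|^2}{|\Omega|} + \bias(\mathcal{B}, \bm{t}) \;=\; p + \bias(\mathcal{B}, \bm{t}).
\]
Since $q(t,F) \in [0,1]$ is nonnegative and $q_{\mathrm{min}} \geq 0$, Markov's inequality then yields
\[
\Pr\bigl(q(t,F) \geq q_{\mathrm{min}}\bigr) \;\leq\; \frac{\mathbb{E}[q(t,F)]}{q_{\mathrm{min}}} \;=\; \frac{p + \bias(\mathcal{B}, \bm{t})}{q_{\mathrm{min}}},
\]
completing the bound (with the edge case $q_{\mathrm{min}} = 0$ handled by noting the right-hand side is then vacuous or interpreted as $+\infty$).

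There is no real obstacle here beyond correctly invoking the definitions; the only subtlety worth mentioning is making sure the averaged simplex vector $\overline{P}_f$ is well-defined so that $q(t,f) = \bm{t}^\top \overline{P}_f$ holds as an identity, and that conditioning is consistent with the finite uniform choice $F \sim \mathcal{U}[\mathcal{B}]$. Given those, the proof is essentially a one-line application of Markov's inequality to the quantity the bias was defined to capture.
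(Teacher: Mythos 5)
Your proposal is correct and follows essentially the same route as the paper's proof: express the ratio as $\Pr(q(t,F)\geq q_{\mathrm{min}})$ for $F\sim\mathcal{U}[\mathcal{B}]$, use the identity $q(t,f)=\bm{t}^\top\overline{P}_f$ (the paper's Lemma~\ref{lem:lemone}), and apply Markov's inequality together with the definition of $\bias(\mathcal{B},\bm{t})$. No gaps.
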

    \noindent
    This theorem shows us that unless our set of information resources is biased towards our target, only a small proportion of information resources will yield a high probability of search success. In most practical cases, $p$ is small enough that uniform random sampling is not considered a plausible strategy, since we typically have small targets embedded in large search spaces. Thus the bound is typically very constraining. The set of information resources will be overwhelmingly unhelpful unless we restrict the given information resources to be positively biased towards the specified target.
    
    \begin{restatable}[Proportion of Successful Problems Under Bias-Free Search]{corollary}{reducedproportion}
    When \(\bias(\mathcal{B}, \bm{t}) = 0\),
    \begin{align*}
        \frac{|\mathcal{B}_{q_{\mathrm{min}}}|}{|\mathcal{B}|} &\leq \frac{p}{q_{\mathrm{min}}}
    \end{align*}
    \label{cor:reducedproportion}
    \end{restatable}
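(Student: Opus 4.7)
The plan is straightforward: this corollary is an immediate specialization of Theorem~\ref{thm:fofir} (Famine of Favorable Information Resources), which establishes
\[
\frac{|\mathcal{B}_{q_{\mathrm{min}}}|}{|\mathcal{B}|} \leq \frac{p + \bias(\mathcal{B}, \bm{t})}{q_{\mathrm{min}}}
\]
for any finite set of information resources $\mathcal{B}$, any fixed $k$-size target $t$, any threshold $q_{\mathrm{min}} \in [0,1]$, and any search algorithm $\mathcal{A}$. Since the hypothesis of the corollary forces the bias term to vanish, no additional machinery is required beyond the parent theorem.

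First, I would invoke the hypothesis $\bias(\mathcal{B}, \bm{t}) = 0$ and substitute it directly into the bound above, which collapses the numerator on the right-hand side to $p$ alone. This yields $|\mathcal{B}_{q_{\mathrm{min}}}|/|\mathcal{B}| \leq p/q_{\mathrm{min}}$, exactly the claim. Because Definition~\ref{def:bias_B} matches the bias quantity appearing in Theorem~\ref{thm:fofir} by construction — both are taken with respect to the uniform distribution over $\mathcal{B}$ against the same fixed $k$-hot target $\bm{t}$ — no reconciliation of notations or intermediate lemma is needed.

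There is essentially no technical obstacle beyond this one-line substitution, so the only thing worth flagging is the interpretive parallel to the bias-free probability corollary obtained from Theorem~\ref{thm:iofir}: both results specialize a bias-sensitive bound to the bias-free regime, making explicit that in the absence of positive bias the fraction of success-inducing information resources can exceed the uniform-sampling baseline $p$ only by the factor $1/q_{\mathrm{min}}$ inherent to the Markov-type inequality underlying the parent theorem.
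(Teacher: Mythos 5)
Your proposal is correct and matches the paper's proof exactly: the paper also obtains this corollary by substituting $\bias(\mathcal{B}, \bm{t}) = 0$ directly into the bound of Theorem~\ref{thm:fofir}, with no additional machinery. Nothing further is needed.
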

    \noindent Directly following Theorem \ref{thm:fofir}, if the algorithm does not induce bias on $\bm{t}$ given a set of information resources, the proportion of successful search problems cannot be any higher than the single-query success probability of uniform random sampling divided by the minimum specified performance.
    
    \begin{restatable}[Futility of Bias-Free Search]{theorem}{futility}
    For any fixed algorithm $\mathcal{A}$, fixed target $t \subseteq \Omega$ with corresponding target function $\bm{t}$, and distribution over information resources $\mathcal{D}$, if $\bias(\mathcal{D}, \bm{t}) = 0$, then
    \begin{align*}
        \Pr(\omega \in t; \mathcal{A}) &= p
    \end{align*}
    where $\Pr(\omega \in t; \mathcal{A})$ represents the per-query probability of successfully sampling an element of $t$ using $\mathcal{A}$, marginalized over information resources $F \sim \mathcal{D}$, and $p$ is the single-query probability of success under uniform random sampling.
    \end{restatable}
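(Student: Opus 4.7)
The plan is to unfold the definition of the marginal per-query probability of success and reduce it directly to the bias expression of Definition~\ref{def:bias_D}. The key observation is that $\Pr(\omega \in t; \mathcal{A})$, as described in the theorem (marginalized over $F \sim \mathcal{D}$), is exactly $\mathbb{E}_{\mathcal{D}}[q(t, F)]$. This is because $q(t,F)$ is itself defined as the expected per-query probability of success conditional on $F$, so marginalizing over $F \sim \mathcal{D}$ yields the unconditional per-query probability of sampling an element of $t$.

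First, I would write $\Pr(\omega \in t; \mathcal{A}) = \mathbb{E}_{\mathcal{D}}[q(t,F)]$, then apply the identity $q(t,F) = \bm{t}^\top \overline{P}_F$ (noted at the end of Definition~\ref{def:bias_D}) together with linearity of expectation to obtain
\[
\Pr(\omega \in t; \mathcal{A}) \;=\; \bm{t}^\top\, \mathbb{E}_{\mathcal{D}}\!\left[\overline{P}_F\right] \;=\; \bm{t}^\top \int_{\mathcal{F}} \overline{P}_f\, \mathcal{D}(f)\, \dif f.
\]
Second, I would rearrange the third line of Definition~\ref{def:bias_D} to get $\bm{t}^\top \int_{\mathcal{F}} \overline{P}_f\, \mathcal{D}(f)\, \dif f = \bias(\mathcal{D}, \bm{t}) + \tfrac{\|\bm{t}\|^2}{|\Omega|}$, and use $\|\bm{t}\|^2 = k$ (since $\bm{t}$ is $k$-hot) to identify $\tfrac{\|\bm{t}\|^2}{|\Omega|} = \tfrac{k}{|\Omega|} = p$. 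Substituting the hypothesis $\bias(\mathcal{D}, \bm{t}) = 0$ then gives $\Pr(\omega \in t; \mathcal{A}) = p$.

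There is no real obstacle here: once the marginalization identity $\Pr(\omega \in t; \mathcal{A}) = \mathbb{E}_{\mathcal{D}}[q(t,F)]$ is justified from the framework's definitions, the remainder is a one-line algebraic substitution. The only subtlety worth stating explicitly is the equivalence between the theorem's informal quantity (the per-query success probability of $\mathcal{A}$ marginalized over $F$) and the framework's formal quantity $\mathbb{E}_{\mathcal{D}}[q(t,F)]$, which follows from unrolling the definition of $q$ as the expected per-query hitting probability of the target under the algorithm's induced distributions.
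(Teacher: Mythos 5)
Your proposal is correct and follows essentially the same route as the paper's proof: both marginalize the per-query success probability over $F \sim \mathcal{D}$, identify the conditional success probability with $\bm{t}^\top \overline{P}_F$ (via Lemma~\ref{lem:lemone}, which is the content of your appeal to $q(t,F) = \bm{t}^\top \overline{P}_F$), and then substitute the definition of $\bias(\mathcal{D},\bm{t})$ to conclude $\Pr(\omega \in t; \mathcal{A}) = \bias(\mathcal{D},\bm{t}) + p = p$. The only cosmetic difference is that the paper writes the marginalization as an explicit integral $\int_{\mathcal{F}} \Pr(\omega \in t \mid f; \mathcal{A})\,\mathcal{D}(f)\dif f$ rather than as $\mathbb{E}_{\mathcal{D}}[q(t,F)]$.
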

    \noindent
    This result shows that without bias, an algorithm can perform no better than uniform random sampling. This is a generalization of Mitchell's idea of the futility of removing biases for binary classification \cite{needforbiases} and Monta\~nez's formal proof for the need for bias for multi-class classification \cite{montanez2017dissertation}. This result shows that bias is necessary for any machine learning or search problem to have better than random chance performance.
    
        \begin{restatable}[Famine of Applicable Targets]{theorem}{foat}
    Let $\mathcal{D}$ be a distribution over a finite set of information resources. Define
    \begin{align*}
        \tau_k &= \{t \mid t \subseteq \Omega, |t| = k\} \\
        \tau_{q_{\mathrm{min}}} &= \{t \mid t \in \tau_k, \bias(\mathcal{D}, \bm{t}) \geq q_{\mathrm{min}}\} 
    \end{align*}
    where $\bm{t}$ is the target function corresponding to the target set $t$. Then,
    \[
      \frac{|\tau_{q_\mathrm{min}}|}{|\tau_k|} \leq \frac{p}{p + q_\mathrm{min}} \leq \frac{p}{q_\mathrm{min}}
    \]
    where $p = \frac{k}{|\Omega|}$.
    \end{restatable}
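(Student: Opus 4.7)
The plan is to reduce the statement to a Markov-type inequality applied to the random quantity $\bm{t}^\top \mathbb{E}_{\mathcal{D}}[\overline{P}_F]$ when $\bm{t}$ is drawn uniformly from $\tau_k$. First I would unfold the bias condition: by Definition~\ref{def:bias_D}, $\bias(\mathcal{D},\bm{t}) \geq q_\mathrm{min}$ is equivalent to
\begin{align*}
\bm{t}^\top \mathbb{E}_{\mathcal{D}}[\overline{P}_F] \;\geq\; q_\mathrm{min} + p,
\end{align*}
so the target set $\tau_{q_\mathrm{min}}$ is exactly the set of $k$-hot vectors whose inner product with the fixed nonnegative vector $v := \mathbb{E}_{\mathcal{D}}[\overline{P}_F]$ exceeds this threshold.

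Next I would compute the average of $\bm{t}^\top v$ over $\bm{t} \in \tau_k$. The cleanest route is a direct combinatorial argument: each coordinate of $\Omega$ appears as a $1$-entry in exactly $\binom{|\Omega|-1}{k-1}$ of the $k$-hot vectors, so
\begin{align*}
\sum_{\bm{t} \in \tau_k} \bm{t}^\top v \;=\; \binom{|\Omega|-1}{k-1}\,\mathbf{1}^\top v \;=\; \binom{|\Omega|-1}{k-1},
\end{align*}
using $\mathbf{1}^\top v = 1$ because $v$ is an average of probability simplex vectors. Dividing by $|\tau_k| = \binom{|\Omega|}{k}$ yields mean value $k/|\Omega| = p$. (Equivalently, this is exactly what Theorem~\ref{thm:consbias} says, since the conservation of bias gives $\sum_{\bm{t} \in \tau_k} \bm{t}^\top v = |\tau_k|\cdot p$.)

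Then I would invoke Markov's inequality on the nonnegative quantity $\bm{t}^\top v$ under the uniform distribution on $\tau_k$:
\begin{align*}
\frac{|\tau_{q_\mathrm{min}}|}{|\tau_k|} \;=\; \Pr_{\bm{t}\sim\mathcal{U}[\tau_k]}\!\bigl(\bm{t}^\top v \geq p+q_\mathrm{min}\bigr) \;\leq\; \frac{\mathbb{E}[\bm{t}^\top v]}{p+q_\mathrm{min}} \;=\; \frac{p}{p+q_\mathrm{min}}.
\end{align*}
The second inequality in the theorem is immediate from $p \geq 0$, which forces $p+q_\mathrm{min} \geq q_\mathrm{min}$. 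I do not expect a genuine obstacle here; the only subtlety is recognizing that after absorbing the additive $p$ into the threshold, the proof becomes a one-line Markov bound, and that the required expectation is furnished for free by the conservation-of-bias identity already proved earlier.
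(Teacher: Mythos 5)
Your proposal is correct and follows essentially the same route as the paper's proof: absorb the additive $p$ into the threshold, apply Markov's inequality under the uniform distribution on $\tau_k$, and evaluate the expectation via the combinatorial identity $\sum_{\bm{t}\in\tau_k}\bm{t} = \binom{|\Omega|-1}{k-1}\mathbf{1}$ together with the fact that the expected simplex vector sums to one. The only cosmetic difference is that you collapse $\mathbb{E}_{\mathcal{D}}[\overline{P}_F]$ into a fixed vector $v$ up front, whereas the paper carries the nested expectation through to the end; the substance is identical.
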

    \noindent
    This theorem shows that the proportion of target sets for which our algorithm is highly biased is small, given that $p$ is small relative to $q_\mathrm{min}$. A high value of $\bias(\mathcal{D}, \bm{t})$ implies that the algorithm, given $\mathcal{D}$, places a large amount of mass on $\bm{t}$ and a small amount of mass on other target functions. Consequently, our algorithm is acceptably biased toward fewer target sets as we increase our minimum threshold of bias.
    
    \begin{restatable}[Famine of Favorable Biasing Distributions]{theorem}{fofbd}
      Given a fixed target function $\bm{t}$, a finite set of information resources $\mathcal{B}$, and a set $\mathcal{P} = \{\mathcal{D}\mid \mathcal{D} \in \mathbb{R}^{|\mathcal{B}|}, \sum_{f \in \mathcal{B}} \mathcal{D}(f) = 1 \}$ of all discrete $|\mathcal{B}|$-dimensional simplex vectors,
      \[
        \frac{\mu(\mathcal{G}_{\bm{t}, q_\mathrm{min}})}{\mu(\mathcal{P})} \leq \frac{p + \bias(\mathcal{B}, \bm{t})}{q_\mathrm{min}}
      \]
      where $\mathcal{G}_{\bm{t}, q_\mathrm{min}} = \{\mathcal{D} \mid \mathcal{D} \in \mathcal{P}, \bias(\mathcal{D}, \bm{t}) \geq q_\mathrm{min}\}$ and $\mu$ is Lebesgue measure.
    \end{restatable}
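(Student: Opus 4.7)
The plan is to treat $\mu(\mathcal{G}_{\bm{t}, q_\mathrm{min}})/\mu(\mathcal{P})$ as a probability (namely, the probability that $\mathcal{D}$ sampled uniformly from the simplex $\mathcal{P}$ satisfies $\bias(\mathcal{D}, \bm{t}) \geq q_\mathrm{min}$) and then attack it with Markov's inequality, using $\bias(\mathcal{B}, \bm{t})$ as the mean of the relevant random variable. Concretely, I would let $\mathcal{D} \sim \mathcal{U}[\mathcal{P}]$ under the normalized Lebesgue measure on the $(|\mathcal{B}|{-}1)$-dimensional simplex and rewrite the ratio of measures as $\Pr(\bias(\mathcal{D}, \bm{t}) \geq q_\mathrm{min})$.

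Next I would compute $\mathbb{E}[\bias(\mathcal{D}, \bm{t})]$. Because $\mathcal{B}$ is finite, Definition~\ref{def:bias_D} collapses the integral to a finite sum, giving
\[
\bias(\mathcal{D}, \bm{t}) \;=\; \bm{t}^\top \sum_{f \in \mathcal{B}} \overline{P}_f \,\mathcal{D}(f) \;-\; \frac{\|\bm{t}\|^2}{|\Omega|},
\]
so bias is linear in $\mathcal{D}$. By the symmetry of the uniform distribution on the simplex, $\mathbb{E}[\mathcal{D}(f)] = 1/|\mathcal{B}|$ for every $f \in \mathcal{B}$, since all coordinate marginals are exchangeable and must sum (in expectation) to one. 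Linearity of expectation then yields
\[
\mathbb{E}[\bias(\mathcal{D}, \bm{t})] \;=\; \bm{t}^\top \!\left(\frac{1}{|\mathcal{B}|}\sum_{f \in \mathcal{B}} \overline{P}_f\right) - \frac{\|\bm{t}\|^2}{|\Omega|} \;=\; \bias(\mathcal{B}, \bm{t}),
\]
recovering Definition~\ref{def:bias_B} exactly.

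The final step is Markov's inequality. Since $\bias(\mathcal{D}, \bm{t})$ can be negative, I would instead apply Markov to the non-negative random variable $\bias(\mathcal{D}, \bm{t}) + p$, which equals $\mathbb{E}_{F \sim \mathcal{D}}[\bm{t}^\top \overline{P}_F] = \mathbb{E}_{F \sim \mathcal{D}}[q(t,F)] \geq 0$. This gives
\[
\Pr(\bias(\mathcal{D}, \bm{t}) \geq q_\mathrm{min}) \;=\; \Pr(\bias(\mathcal{D}, \bm{t}) + p \geq q_\mathrm{min} + p) \;\leq\; \frac{p + \bias(\mathcal{B}, \bm{t})}{p + q_\mathrm{min}} \;\leq\; \frac{p + \bias(\mathcal{B}, \bm{t})}{q_\mathrm{min}},
\]
where the last inequality uses $p \geq 0$ and the non-negativity of the numerator (which itself follows from $\bias(\mathcal{B}, \bm{t}) = \mathbb{E}_{\mathcal{U}[\mathcal{B}]}[\bm{t}^\top \overline{P}_F] - p \geq -p$).

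The only real obstacle is the symmetry justification, i.e., verifying that Lebesgue measure on the simplex makes the coordinates exchangeable with mean $1/|\mathcal{B}|$; everything else is linearity plus Markov. I would handle this by observing that any permutation of coordinates is a measure-preserving isometry of $\mathcal{P}$, so the marginals of $\mathcal{D}(f_1), \ldots, \mathcal{D}(f_{|\mathcal{B}|})$ are identically distributed, and since they must sum to $1$ almost surely, each has expectation $1/|\mathcal{B}|$.
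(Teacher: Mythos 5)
Your proposal is correct and follows essentially the same route as the paper: rewrite the measure ratio as $\Pr(\bias(D,\bm{t})\geq q_\mathrm{min})$ for $D\sim\mathcal{U}[\mathcal{P}]$, shift by $p$ to get the non-negative variable $\mathbb{E}_D[\bm{t}^\top\overline{P}_F]$, apply Markov, and identify $\mathbb{E}_{\mathcal{U}[\mathcal{P}]}[\bias(D,\bm{t})]=\bias(\mathcal{B},\bm{t})$ via the exchangeability of simplex coordinates. The only cosmetic difference is that the paper isolates that last identification as a separate lemma (Equivalence of Bias, Lemma~\ref{lem:equivalencebias}), whose proof uses exactly your symmetry argument that $\mathbb{E}_{\mathcal{U}[\mathcal{P}]}[D(f)]=1/|\mathcal{B}|$.
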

    \noindent
    We see that the proportion of distributions over $\mathcal{B}$ for which our algorithm is acceptably biased towards a fixed target function $\bm{t}$ decreases as we increase our minimum acceptable level of bias, $q_\mathrm{min}$. Additionally, the greater the amount of bias induced by our algorithm given a set of information resources on a fixed target, the higher the probability of identifying a suitable distribution that achieves successful search. However, unless the set is already filled with favorable elements, finding a minimally favorable distribution over that set is difficult.
    
    \begin{restatable}[Bias Over Distributions]{theorem}{density}
      Given a finite set of information resources $\mathcal{B}$, a fixed target function $\bm{t}$, and a set $\mathcal{P} = \{\mathcal{D}\mid \mathcal{D} \in \mathbb{R}^{|\mathcal{B}|}, \sum_{f \in \mathcal{B}} \mathcal{D}(f) = 1 \}$ of discrete $|\mathcal{B}|$-dimensional simplex vectors,
      \[
        \int_\mathcal{P} \bias(\mathcal {D}, \bm{t}) \dif\mathcal {D} = C \cdot \bias(\mathcal{B}, \bm{t})
      \]
      where $C = \int_\mathcal{P} \dif\mathcal {D}$ is the uniform measure of set $\mathcal{P}$. For an unbiased set $\mathcal{B}$,
      \[
        \int_\mathcal{P} \bias(\mathcal{D}, \bm{t}) \dif\mathcal {D} = 0
      \]
    \end{restatable}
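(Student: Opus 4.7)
The plan is to unpack Definition~\ref{def:bias_D} into a finite sum, swap the sum and the integral, and reduce the problem to computing the integral of a single simplex coordinate. Since $\mathcal{D}$ is a discrete distribution over $\mathcal{B}$, the expectation $\mathbb{E}_\mathcal{D}[\overline{P}_F]$ becomes $\sum_{f \in \mathcal{B}} \mathcal{D}(f)\, \overline{P}_f$, so
\[
  \bias(\mathcal{D}, \bm{t}) = \sum_{f \in \mathcal{B}} \mathcal{D}(f)\, \bm{t}^\top \overline{P}_f - \frac{\|\bm{t}\|^2}{|\Omega|}.
\]
Integrating over $\mathcal{P}$ and pulling the constants $\bm{t}^\top \overline{P}_f$ outside yields
\[
  \int_\mathcal{P} \bias(\mathcal{D}, \bm{t}) \dif\mathcal{D} = \sum_{f \in \mathcal{B}} \bm{t}^\top \overline{P}_f \int_\mathcal{P} \mathcal{D}(f) \dif\mathcal{D} - \frac{\|\bm{t}\|^2}{|\Omega|} \cdot C.
\]

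The key step is to evaluate $\int_\mathcal{P} \mathcal{D}(f) \dif\mathcal{D}$. The simplex $\mathcal{P}$ is invariant under any permutation of its coordinates, and the natural measure on $\mathcal{P}$ inherits this symmetry, so this integral is independent of $f$; call its common value $I$. Summing over $f \in \mathcal{B}$ and using the identity $\sum_{f \in \mathcal{B}} \mathcal{D}(f) = 1$ that holds pointwise on $\mathcal{P}$ gives $|\mathcal{B}|\, I = \int_\mathcal{P} 1 \dif\mathcal{D} = C$, so $I = C / |\mathcal{B}|$.

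Substituting this back and factoring out $C$ yields
\[
  \int_\mathcal{P} \bias(\mathcal{D}, \bm{t}) \dif\mathcal{D} = C \left( \bm{t}^\top \frac{1}{|\mathcal{B}|} \sum_{f \in \mathcal{B}} \overline{P}_f - \frac{\|\bm{t}\|^2}{|\Omega|} \right) = C \cdot \bias(\mathcal{B}, \bm{t}),
\]
where the final equality is exactly Definition~\ref{def:bias_B}. The unbiased-set consequence then follows by setting $\bias(\mathcal{B}, \bm{t}) = 0$. The only genuine subtlety, which I would address explicitly at the start, is measure-theoretic: the integrals are taken against Lebesgue measure on the $(|\mathcal{B}|-1)$-dimensional affine simplex (the same measure implicit in $C$), and coordinate permutations must act as isometries of this hyperplane for the symmetry argument to justify $I = C/|\mathcal{B}|$. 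Once that setup is pinned down, everything else is linearity of the integral and a direct appeal to Definition~\ref{def:bias_B}.
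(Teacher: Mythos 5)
Your proof is correct and follows essentially the same route as the paper: the paper rewrites the integral as $C \cdot \mathbb{E}_{\mathcal{U}[\mathcal{P}]}[\bias(D,\bm{t})]$ and invokes its Lemma~\ref{lem:equivalencebias} (Equivalence of Bias), whose own proof is exactly your expansion of the bias into a sum over $\mathcal{B}$ followed by the permutation-symmetry argument giving $\int_{\mathcal{P}}\mathcal{D}(f)\dif\mathcal{D} = C/|\mathcal{B}|$. You have simply inlined that lemma, and your explicit remark about the measure on the affine simplex being permutation-invariant is a point the paper leaves implicit.
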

    \noindent
    This theorem states that the total bias on a fixed target function over all possible distributions is proportional to the bias induced by the algorithm given $\mathcal{B}$. When there is no bias over a set of information resources, the total bias over all distributions sums to $0$. It follows that any distribution over $\mathcal{D}$ for which the algorithm places positive bias on $\bm{t}$ is offset by one or more for which the algorithm places negative bias on $\bm{t}$.
    
    \begin{restatable}[Conservation of Bias Over Distributions]{corollary}{conservationdistributions}
    Let $\tau_{k} = \{ \bm{t} | \bm{t} \in \{ 0, 1 \}^{ |\Omega| }, ||\bm{t}|| = \sqrt{k} \}$ be the set of all $|\Omega|$-length $k$-hot vectors. Then, \[\sum_{\bm{t} \in \tau_{k}} \int_{\mathcal{P}} \bias(\mathcal{D}, \bm{t}) \dif\mathcal{D} = 0\]
    \end{restatable}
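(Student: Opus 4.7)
The plan is to reduce this corollary to the two results that immediately precede it: the Bias Over Distributions theorem (which evaluates the inner integral as $C \cdot \bias(\mathcal{B}, \bm{t})$) and the Conservation of Bias theorem (which says that summing bias over all $k$-hot target functions against a fixed distribution yields zero). Either result can be applied first; the argument works from both directions, so the strategy is simply to interchange a finite sum with a Lebesgue integral and invoke whichever result collapses the resulting expression.

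Concretely, I would first note that $\tau_k$ is finite, so by linearity
\[
\sum_{\bm{t} \in \tau_k} \int_{\mathcal{P}} \bias(\mathcal{D}, \bm{t}) \dif\mathcal{D} = \int_{\mathcal{P}} \sum_{\bm{t} \in \tau_k} \bias(\mathcal{D}, \bm{t}) \dif\mathcal{D}.
\]
For each fixed $\mathcal{D} \in \mathcal{P}$, the Conservation of Bias theorem applies (since $\mathcal{D}$ is a distribution over the information resources in $\mathcal{B}$), yielding $\sum_{\bm{t} \in \tau_k} \bias(\mathcal{D}, \bm{t}) = 0$. The integrand therefore vanishes identically, and the double quantity is zero.

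As a sanity check, I would also verify the alternative route: applying Bias Over Distributions term-by-term gives
\[
\sum_{\bm{t} \in \tau_k} \int_{\mathcal{P}} \bias(\mathcal{D}, \bm{t}) \dif\mathcal{D} = C \sum_{\bm{t} \in \tau_k} \bias(\mathcal{B}, \bm{t}),
\]
and since $\bias(\mathcal{B}, \bm{t})$ is by Definition~\ref{def:bias_B} identical to $\bias(\mathcal{U}[\mathcal{B}], \bm{t})$, another application of Conservation of Bias (this time with the uniform distribution over $\mathcal{B}$) kills the remaining sum. Agreement of the two routes confirms the constant $C$ plays no role.

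There is no real obstacle here beyond bookkeeping: the main care point is justifying the sum/integral swap (trivial because $\tau_k$ is finite and $\bias(\mathcal{D}, \bm{t})$ is bounded on the compact simplex $\mathcal{P}$) and being explicit that the Conservation of Bias theorem is invoked with $\mathcal{D}$ rather than with the uniform distribution on $\mathcal{B}$, so that it applies pointwise inside the integral rather than only after extracting $\bias(\mathcal{B}, \bm{t})$.
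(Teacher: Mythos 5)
Your primary argument---interchanging the finite sum with the integral and applying the Conservation of Bias theorem pointwise for each fixed $\mathcal{D} \in \mathcal{P}$---is exactly the proof given in the paper, and it is correct. The alternative route via Bias Over Distributions is a fine consistency check but adds nothing beyond the paper's argument.
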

    \noindent
    This result extends our conservation results, showing that the total bias over all distributions and all $k$-size target sets sums to zero, even when beginning with a set of information resources that is favorably biased towards a particular target.

\section{Examples}

\subsection{Genetic Algorithms}
Genetic algorithms are optimization methods inspired by evolutionary biology~\cite{reeves2002genetic}. We can represent genetic algorithms in our search framework as follows:
\begin{itemize}
    \item $\mathcal{A}$ - a genetic algorithm, with standard variation (mutation, crossover, etc.) operators.
    
    \item $\Omega$ - space of possible configurations (genotypes).
    
    \item $T$ - set of all configurations which perform well on some task.
    
    \item $F$ - a fitness function which can evaluate a configuration's fitness.
    
    \item $(\Omega,T, F)$ - genetic algorithm task.
\end{itemize}

Given any genetic algorithm that is unbiased towards a particular small target when averaged over a set of fitness functions (as in No Free Lunch scenarios), the proportion of highly favorable fitness functions in that set must also be small, which we state as a corollary following directly from Corollary~\ref{cor:reducedproportion}.
\begin{restatable}[Famine of Favorable Fitness Functions]{corollary}{fofff}
      For any fixed target $t \subseteq \Omega$ and fixed genetic algorithm unbiased relative to a finite set of fitness functions $\mathcal{B}$, the proportion of fitness functions in $\mathcal{B}$ with expected per-query probability of success at least $q_{\text{min}}$ is no greater than $|t|/(q_{\text{min}}|\Omega|)$.
    \end{restatable}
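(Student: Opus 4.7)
The plan is to recognize this corollary as a direct specialization of Corollary~\ref{cor:reducedproportion} (Proportion of Successful Problems Under Bias-Free Search) to the genetic-algorithm instantiation given just above, and indeed the corollary statement itself flags this reduction. My first step is to translate the hypothesis ``unbiased relative to a finite set of fitness functions $\mathcal{B}$'' into the formal condition $\bias(\mathcal{B}, \bm{t}) = 0$, where $\bm{t}$ is the target function corresponding to $t$. The genetic-algorithm tuple $(\Omega, T, F)$ already matches the search framework, with fitness functions playing the role of information resources and the algorithm $\mathcal{A}$ acting as a black-box process that induces distributions over $\Omega$ at each query step. Consequently, the definitions of $q(t, f)$ and $\bias(\mathcal{B}, \bm{t})$ apply without modification.

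Second, I would let $k = |t|$ so that $p = k/|\Omega| = |t|/|\Omega|$, and define
\[
\mathcal{B}_{q_{\mathrm{min}}} = \{\, f \mid f \in \mathcal{B},\ q(t, f) \geq q_{\mathrm{min}} \,\},
\]
which is precisely the set of fitness functions whose expected per-query probability of success is at least $q_{\mathrm{min}}$. Invoking Corollary~\ref{cor:reducedproportion} with this $\mathcal{B}_{q_{\mathrm{min}}}$ then yields
\[
\frac{|\mathcal{B}_{q_{\mathrm{min}}}|}{|\mathcal{B}|} \leq \frac{p}{q_{\mathrm{min}}} = \frac{|t|}{q_{\mathrm{min}}\,|\Omega|},
\]
which is the claimed bound.

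The main obstacle, such as it is, is not a new estimate but the sanity check that the standard genetic-algorithm operators (selection, crossover, mutation) actually produce the per-step sampling distributions $P_i$ over $\Omega$ required by the framework. This is conceptually straightforward: at each generation, the population together with the history of fitness evaluations determines the distribution from which the next candidate genotype is sampled, matching the black-box picture in Figure~\ref{fig:jellyfish}. Once that identification is in place, no further computation is needed, because this corollary is a cosmetic restatement of the already-proven bias-free proportion bound applied to the genetic-algorithm setting.
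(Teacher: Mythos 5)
Your proposal is correct and matches the paper's intended derivation exactly: the paper gives no separate proof for this corollary, stating only that it follows directly from Corollary~\ref{cor:reducedproportion} by identifying fitness functions with information resources, reading ``unbiased'' as $\bias(\mathcal{B},\bm{t})=0$, and substituting $p=|t|/|\Omega|$. Nothing in your argument deviates from that route.
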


\subsection{Binary Classification}
We can cast binary classification as a search problem, as follows~\cite{montanez2017fof}:

\begin{itemize}
    \item $\mathcal{A}$ - classification algorithm, such as an SVM or neural network.
    
    \item $\Omega$ - space of possible binary labelings over an instance space.
    
    \item $t \subseteq \Omega$ - set of all hypotheses with less than 10\% classification error.
    
    \item $F$ - set of training examples, where $F(\emptyset$) is the full set of training data and $F(c)$ is the loss on training data for hypothesis $c$.
    
    \item $(\Omega,t, F)$ - binary classification learning task.
\end{itemize}

In our example, let $|\mathrm{\Omega}| = 2^{100}$. Assume the size of our target set is $|t| = 2^{10}$, the set of training examples $F$ is drawn from a distribution $\mathcal{D}$, and that the minimum performance $q_{\mathrm{min}}$ we want to achieve is $0.5$. Then, by Corollary 1, if our algorithm  (relative to $\mathcal{D}$) does not place any bias on the target set,
\begin{align*}
    \Pr\left(q(t, F) \geq \frac{1}{2}\right) &\leq \frac{p}{q_{\mathrm{min}}}
    = \frac{\frac{2^{10}}{2^{100}}}{\frac{1}{2}}
    = 2^{-89}.
\end{align*}
Thus, the probability that we will have selected a dataset that results in at least our desired level of performance is upper bounded by $2^{-89}$. Notice that if we raised the minimum threshold, then the probability would decrease---favorable datasets would become more unlikely. 

To perform better than uniform random sampling, we would need to introduce bias into the algorithm. For example, predetermined information or assumptions about the target set could be used to determine which hypotheses are more plausible. The principle of Occam's razor \cite{Rasmussen:2000:OR:3008751.3008792} is often used, which is the assumption that the elements in the target set are likely the ``simpler" elements, by some definition of simplicity. Relating this to our formal definition of bias, if we introduce correct assumptions into the algorithm, then the expected alignment of the target set and the induced probability distribution over the search space increases accordingly. 

\section{Conclusion}

We build on the algorithmic search framework and extend Famine of Forte results to search problems with fixed targets and varying information resources. Our notion of bias quantifies the extent to which an algorithm is predisposed to a particular fixed target. We show that bias towards any target necessarily implies bias against the other remaining targets, underscoring the fact that no universally applicable form of bias can exist. Furthermore, one cannot perform better than uniform random sampling without introducing a predisposition in the algorithm towards a desired target---unbiased algorithms are useless. Few information resources can be greatly favorable towards any fixed target, unless the algorithm is already predisposed to the target no matter the information resource given. Thus, in machine learning as elsewhere, biases are needed for better than chance performance. Biases must also be correct, since the effectiveness of any bias depends on how well it aligns with the given target actually being sought.

\bibliographystyle{splncs04}
\bibliography{bibliography}

\section{Appendix: Proofs}
    
    \begin{restatable}[Expected Per Query Performance From Expected Distribution]{lemma}{per-query-performance} 
      This lemma has been proven by Monta\~nez \cite{montanez2017fof} and is directly drawn from \cite{montanez2017fof}. Let \(t\) be a target set, \(q(t, f)\) be the expected per-query probability of success for an algorithm, and $\nu$ be the conditional joint measure induced by that algorithm over finite sequences of probability distributions and search histories, conditioned on external information resource \(f\). Denote a probability distribution sequence by \(\tilde P\) and a search history by h. Let \(\mathcal{U}[\tilde P]\) denote a uniform distribution on elements of \(\tilde P\) and define $\bar{P}(x | f) = \int{\mathbb{E}_{P \sim \mathcal{U}[\tilde{P}]}}[P(x)] \dif\nu (\tilde{P},h|f)$. Then,
      \[
        q(t,f) = \overline{P}(X \in t|f)
      \]
      \label{lem:lemone}
    \end{restatable}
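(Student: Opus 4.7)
The plan is to unfold both sides of the claimed identity, show that they are two different names for the same integral, and hand off the measure-theoretic bookkeeping to Tonelli's theorem (every quantity in sight is nonnegative and bounded by $1$, so there is nothing subtle about exchanging sums and integrals).

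First I would start from the definition of the expected per-query probability of success,
\[
  q(t,f) = \mathbb{E}_{\tilde{P}, H}\!\left[\,\frac{1}{|\tilde{P}|}\sum_{i=1}^{|\tilde{P}|} P_i(\omega \in t)\,\Bigg|\, f\right],
\]
and observe that the inner average is exactly the expectation of $P(\omega \in t)$ when $P$ is drawn uniformly from the entries of the sequence $\tilde{P}$; that is,
\[
  \frac{1}{|\tilde{P}|}\sum_{i=1}^{|\tilde{P}|} P_i(\omega \in t) \;=\; \mathbb{E}_{P \sim \mathcal{U}[\tilde{P}]}\!\left[P(\omega \in t)\right].
\]
This is the only rewrite that needs any thought; after this the rest of the argument is bookkeeping.

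Next I would realize the outer expectation as an integral against the conditional joint measure $\nu$ over sequence/history pairs, writing
\[
  q(t,f) \;=\; \int \mathbb{E}_{P \sim \mathcal{U}[\tilde{P}]}\!\left[P(\omega \in t)\right]\,\mathrm{d}\nu(\tilde{P},h\mid f).
\]
Since $P(\omega \in t) = \sum_{x \in t} P(x)$ and the target set $t$ is finite, linearity lets me pull the sum over $x \in t$ outside both the inner expectation and the outer integral (Tonelli applies because $P(x)\in[0,1]$), yielding
\[
  q(t,f) \;=\; \sum_{x \in t}\int \mathbb{E}_{P \sim \mathcal{U}[\tilde{P}]}\!\left[P(x)\right]\,\mathrm{d}\nu(\tilde{P},h\mid f) \;=\; \sum_{x \in t} \overline{P}(x\mid f),
\]
where the last equality is exactly the definition of $\overline{P}(x\mid f)$ supplied in the lemma. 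Identifying $\sum_{x \in t}\overline{P}(x\mid f)$ with $\overline{P}(X \in t\mid f)$ closes the chain.

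The only step that requires even minor care is the justification for swapping the finite sum over $x \in t$ with the integral against $\nu$ and with the uniform-over-timesteps expectation; this is immediate from nonnegativity and boundedness. There is no real obstacle, which is appropriate since the result is cited verbatim from prior work and is essentially a change of notation exposing that $q(t,f)$ is the mass the time-averaged sampling distribution places on the target.
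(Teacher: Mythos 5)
Your proof is correct. Note that the paper itself supplies no proof of this lemma---it is stated as imported verbatim from Monta\~nez's Famine of Forte paper---so there is no in-paper argument to compare against; your derivation (rewriting the time-average $\frac{1}{|\tilde{P}|}\sum_i P_i(\omega\in t)$ as $\mathbb{E}_{P\sim\mathcal{U}[\tilde{P}]}[P(\omega\in t)]$, realizing the outer conditional expectation as integration against $\nu(\tilde{P},h\mid f)$, and commuting the finite sum over $x\in t$ with the integral) is precisely the standard unfolding of the definitions that the cited source relies on. The only step worth stating explicitly is that $\overline{P}(X\in t\mid f)$ means $\sum_{x\in t}\overline{P}(x\mid f)$, since the lemma defines $\overline{P}(\cdot\mid f)$ only pointwise; with that said, your chain of equalities closes, and the appeal to Tonelli is overkill---a finite sum of $[0,1]$-valued integrands commutes with the integral by linearity alone.
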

    
    \begin{restatable}[Expectation of Simplex Vectors is Simplex]{lemma}{simplex}
        Let $\mathcal{D}$ be a distribution over a set $\mathcal{F}$ that places $\mathcal{D}(f)$ probability mass on $f \in \mathcal{F}$, and let $\mathcal{X}$ be a set of $|\Omega|$-length simplex vectors, where each $\mathcal{X}_f$ corresponds to a $f \in \mathcal{F}$. Then, $\mathbb{E}_\mathcal{D}[\mathcal{X}_F]$ is a simplex vector. 
        \label{lem:simplex}
    \end{restatable}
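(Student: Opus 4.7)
The plan is to verify directly the two defining properties of a simplex vector---coordinate-wise non-negativity and unit total mass---for the vector $\mathbb{E}_\mathcal{D}[\mathcal{X}_F]$. Writing the expectation as $\mathbb{E}_\mathcal{D}[\mathcal{X}_F] = \sum_{f \in \mathcal{F}} \mathcal{D}(f)\,\mathcal{X}_f$ in the discrete case (or as the corresponding integral $\int_{\mathcal{F}} \mathcal{X}_f\,\mathcal{D}(f)\dif f$ in the continuous case), the proof reduces to two coordinate-level checks that each follow from linearity of expectation.

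First, I would fix a coordinate index $i \in \{1,\dots,|\Omega|\}$ and use linearity to write $(\mathbb{E}_\mathcal{D}[\mathcal{X}_F])_i = \mathbb{E}_\mathcal{D}[(\mathcal{X}_F)_i]$. Since each $\mathcal{X}_f$ is a simplex vector, $(\mathcal{X}_f)_i \geq 0$ for every $f \in \mathcal{F}$, and $\mathcal{D}$ places non-negative mass on $\mathcal{F}$; the expectation of a non-negative random variable is non-negative, so this coordinate is at least $0$.

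Second, I would sum across all coordinates and interchange the order of summation and expectation (justified by non-negativity via Tonelli in the continuous case, and trivially in the finite case):
\begin{align*}
    \sum_{i=1}^{|\Omega|} \bigl(\mathbb{E}_\mathcal{D}[\mathcal{X}_F]\bigr)_i
    = \mathbb{E}_\mathcal{D}\!\left[\sum_{i=1}^{|\Omega|}(\mathcal{X}_F)_i\right]
    = \mathbb{E}_\mathcal{D}[1]
    = 1,
\end{align*}
where the middle equality uses that each realization $\mathcal{X}_f$ is itself a simplex vector. Together with the first step, this shows $\mathbb{E}_\mathcal{D}[\mathcal{X}_F]$ lies in the $|\Omega|$-simplex.

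There is no real obstacle here: the entire argument is a routine application of linearity of expectation together with the pointwise simplex property of each $\mathcal{X}_f$. The only mild subtlety is justifying the swap of sum and expectation when $\mathcal{F}$ is uncountable, and that is handled automatically by non-negativity. Consequently the lemma is essentially a bookkeeping result that legitimizes treating $\overline{P}_F$ and its $\mathcal{D}$-expectation as a genuine distribution on $\Omega$ in the bias definitions and subsequent proofs.
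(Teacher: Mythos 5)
Your proof is correct and follows essentially the same route as the paper's: both verify coordinate-wise non-negativity from the non-negativity of $\mathcal{D}(f)$ and each $\mathcal{X}_f$, then interchange the coordinate sum with the expectation to show the entries total $1$. Your explicit appeal to Tonelli for the uncountable case is a minor refinement the paper leaves implicit, but the argument is otherwise identical.
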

    
    \begin{proof}
        By definition of expectation,
        \begin{align*}
            \mathbb{E}_{\mathcal{D}}[\mathcal{X}_F] 
            &= \int_{\mathcal{F}} \mathcal{D}(f) \mathcal{X}_f \dif f
            \end{align*}
            Note that each probability is non-negative and each $\mathcal{X}_f$ is simplex, so the sum has no negative values and thus the expectation has no negative values. To show that the expectation is also a simplex vector, we will sum over its components.
            \begin{align*}
                \int_{\mathcal{F}} \left[\sum_{i = 1}^{|\mathrm{\Omega}|} \mathcal{D}(f) \mathcal{X}_{f_i}\right] \dif f
                &=  \int_{\mathcal{F}} \mathcal{D}(f) \sum_{i = 1}^{|\mathrm{\Omega}|} \mathcal{X}_{f_i} \dif f\\
                &= \int_{\mathcal{F}} \mathcal{D}(f) \dif f\\
                &= 1.
            \end{align*}
        where the penultimate equality follows from the fact that each $\mathcal{X}_f$ is a simplex vector, so must sum to $1$, and the final equality from the fact that $\mathcal{D}$ is a probability distribution on $\mathcal{F}$. Since the expectation is non-negative and the probabilities sum to $1$, each probability mass $\mathcal{D}(f) \in [0,1]$. Thus, the expected value of a set of simplex vectors is a simplex vector.
    \end{proof}
    
    \begin{restatable}[Equivalence of Bias]{lemma}{equivalence} 
      Given a fixed target function $\bm{t}$, a finite set of information resources $\mathcal{B}$, and a set $\mathcal{P} = \{\mathcal{D}| \mathcal{D} \in \mathbb{R}^{|\mathcal{B}|}, \sum_{f \in \mathcal{B}} \mathcal{D}(f) = 1 \}$ of all discrete $|\mathcal{B}|$-dimensional simplex vectors,
      \[
        \mathbb{E}_{\mathcal{U}[\mathcal{P}]}[\bias(D, \bm{t})] = \bias(\mathcal{B}, \bm{t})
      \]
      where $D \sim \mathcal{U}[\mathcal{P}]$.
      \label{lem:equivalencebias}
    \end{restatable}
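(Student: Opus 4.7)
The plan is to unfold the definition of $\bias(D,\bm{t})$, push the expectation inside the linear expression, and then exploit the symmetry of the uniform measure on the simplex to reduce the expected coordinate weights to $1/|\mathcal{B}|$, which recovers exactly the definition of $\bias(\mathcal{B},\bm{t})$.

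First, I would write $D$ as a random vector in $\mathbb{R}^{|\mathcal{B}|}$ whose $f$-th coordinate is $D(f)$, and use Definition~\ref{def:bias_D} to obtain
\[
  \bias(D,\bm{t}) = \bm{t}^\top \sum_{f\in\mathcal{B}} D(f)\,\overline{P}_f - \frac{\|\bm{t}\|^2}{|\Omega|}.
\]
Since this is linear in the coordinates of $D$, linearity of expectation gives
\[
  \mathbb{E}_{\mathcal{U}[\mathcal{P}]}[\bias(D,\bm{t})] = \bm{t}^\top \sum_{f\in\mathcal{B}} \mathbb{E}_{\mathcal{U}[\mathcal{P}]}[D(f)]\,\overline{P}_f - \frac{\|\bm{t}\|^2}{|\Omega|}.
\]

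Next I would compute $\mathbb{E}_{\mathcal{U}[\mathcal{P}]}[D(f)]$ by a symmetry argument. Any permutation $\sigma$ of the coordinates of $\mathbb{R}^{|\mathcal{B}|}$ restricts to a measure-preserving bijection of $\mathcal{P}$ with itself (the simplex is invariant under coordinate relabelings, and $\mathcal{U}[\mathcal{P}]$ is the normalized Lebesgue measure, which is invariant under these rigid motions). Hence $\mathbb{E}[D(f)]$ takes the same value for every $f\in\mathcal{B}$. Because $\sum_{f\in\mathcal{B}} D(f) = 1$ pointwise on $\mathcal{P}$, taking expectations and using linearity forces the common value to be $1/|\mathcal{B}|$.

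Substituting this back yields
\[
  \mathbb{E}_{\mathcal{U}[\mathcal{P}]}[\bias(D,\bm{t})] = \bm{t}^\top \left( \frac{1}{|\mathcal{B}|}\sum_{f\in\mathcal{B}} \overline{P}_f \right) - \frac{\|\bm{t}\|^2}{|\Omega|},
\]
which is exactly $\bias(\mathcal{B},\bm{t})$ by Definition~\ref{def:bias_B}. The only mildly delicate point is the symmetry/marginal computation for the uniform measure on the simplex; everything else is routine linearity. If one preferred a direct calculation, the marginals of $\mathcal{U}[\mathcal{P}]$ coincide with a Dirichlet distribution with all parameters equal to $1$, whose coordinate means are $1/|\mathcal{B}|$, giving the same conclusion.
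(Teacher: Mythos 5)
Your proof is correct and follows essentially the same route as the paper's: unfold the definition of $\bias(D,\bm{t})$, apply linearity of expectation, show $\mathbb{E}_{\mathcal{U}[\mathcal{P}]}[D(f)] = 1/|\mathcal{B}|$ by symmetry, and recover $\bias(\mathcal{B},\bm{t})$. Your justification of the symmetry step (permutation invariance of the simplex under the uniform measure, or equivalently the Dirichlet$(1,\dots,1)$ marginal means) is in fact somewhat more explicit than the paper's, which simply asserts that all coordinates receive equal mass in expectation.
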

    \begin{proof}
        Let $F \sim D$. Then,
        \begin{align*}
            \mathbb{E}_{\mathcal{U}[\mathcal{P}]}[\bias(D, \bm{t})]
                &= \mathbb{E}_{\mathcal{U}[\mathcal{P}]}[\mathbb{E}_D[\bm{t}^\top \overline{P}_F] - p] \\
                &= \mathbb{E}_{\mathcal{U}[\mathcal{P}]}\left[\sum_{f \in \mathcal{B}} D(f) \bm{t}^\top \overline{P}_f\right] - p \\
                &= \sum_{f \in \mathcal{B}} \bm{t}^\top \overline{P}_f \mathbb{E}_{\mathcal{U}[\mathcal{P}]}[D(f)] - p
        \end{align*}
        The quantity $\mathbb{E}_{\mathcal{U}[\mathcal{P}]}[D(f)]$ is a uniform expectation on the amount of mass that the random variable $D$ places on resource $f$. Since $\mathcal{P}$ contains all possible distributions over $\mathcal{B}$, under uniform expectation the same amount of probability mass gets placed on each information resource. So, $\mathbb{E}_{\mathcal{U}[\mathcal{P}]}[D(i)] = \mathbb{E}_{\mathcal{U}[\mathcal{P}]}[D(j)]$ for any $i, j \in \mathcal{B}$. Since the probability mass on any two information resources is equivalent and  the total probability mass must sum to one by Lemma~\ref{lem:simplex}, we have  $\mathbb{E}_{\mathcal{U}[\mathcal{P}]}[D(f)] = \frac{1}{|\mathcal{B}|}$. Continuing,
        \begin{align*}
            \mathbb{E}_{\mathcal{U}[\mathcal{P}]}[\bias(D, \bm{t})]
                &= \frac{1}{|\mathcal{B}|} \sum_{f \in \mathcal{B}} \bm{t}^\top \overline{P}_f - p \\
                &= \bias(\mathcal{B}, \bm{t}).
        \end{align*}
    \end{proof}

    \iofir*
    \begin{proof}
        We seek to bound the probability of achieving a successful search on target function $\bm{t}$ with information resource $F$. By Lemma~\ref{lem:lemone}, it follows that 
        \begin{align*}
            \Pr(q(t, F) \geq q_{\mathrm{min}}) &= \Pr(\overline{P}(\omega \in t|F) \geq q_{\mathrm{min}}) \\
                                               &= \Pr(\bm{t}^{\top}\overline{P}_{F} \geq q_{\mathrm{min}})
        \end{align*}
        where $\omega \in t$ means the target function $\bm{t}$ evaluated at $\omega$ is one, and $\overline{P}_F$ represents the $|\mathrm{\Omega}|$-length probability vector defined by $\overline{P} (\cdot|F)$. Applying Markov's Inequality,
        \begin{align*}
            \Pr(q(t, f) \geq q_{\mathrm{min}}) &\leq \frac{1}{q_{\mathrm{min}}} \mathbb{E}_{\mathcal{D}}[\bm{t}^{\top}\overline{P}_{F}] \\\
                                               &= \frac{p + \bias(\mathcal{D}, \bm{t})}{q_{\mathrm{min}}}.
        \end{align*}
    \end{proof}
    
    \reducedprob*
    \begin{proof}
        This result follows directly from Theorem~\ref{thm:iofir}.
    \end{proof}
    
    \conservation*
    \begin{proof} 
        \begin{align*}
        \sum_{\bm{t} \in \tau_{k}} \bias(\mathcal{D},\bm{t}) &= \sum_{\bm{t} \in \tau_{k}} \mathbb{E}_\mathcal{D}[\bm{t}^{\top}P] - p \\
        &= \sum_{\bm{t} \in \tau_{k}} \mathbb{E}_\mathcal{D}[\bm{t}^{\top}P] - \sum_{\bm{t} \in \tau_{k}} \frac{k}{n} \\
        &= \mathbb{E}_\mathcal{D}\Bigg[\sum_{\bm{t} \in \tau_{k}} \bm{t}^{\top}P \Bigg] - \sum_{\bm{t} \in \tau_{k}} \frac{k}{n} \\
        &= \mathbb{E}_\mathcal{D} \Bigg[ \binom{n-1}{k-1} \mathbf{1}^{\top}P \Bigg]-\binom{n}{k}\frac{k}{n} \\
        &= \mathbb{E}_\mathcal{D} \Bigg[ \binom{n-1}{k-1} \Bigg]-\binom{n-1}{k-1} \\ 
        &= 0.
    \end{align*}
    \end{proof}
    
     \geometricdivergence*
    \begin{proof}
        Applying Theorem~\ref{thm:iofir},
        \[
          \Pr(q(t, F) \geq q_{\mathrm{min}}) \leq \frac{p + \bias(\mathcal{D}, \bm{t})}{q_{\mathrm{min}}}
        \]
        By the definition of $\bias(\mathcal{D}, \bm{t})$,
        \begin{align*}
            \Pr(q(t, F) \geq q_{\mathrm{min}}) &\leq \frac{ p + \mathbb{E}_{\mathcal{D}}[\bm{t}^\top \overline{P}_{F}] - p}{q_{\mathrm{min}}} \\
                                               &= \frac{\bm{t}^{\top} \mathbb{E}_{\mathcal{D}} [\overline{P}_{F}]}{q_\mathrm{min}} \\
                                               &= \frac{\|\bm{t}\|}{q_\mathrm{min}} \bigg( \frac{\bm{t}^{\top} \mathbb{E}_{\mathcal{D}}[\overline{P}_{F}]}{\|\bm{t}\|}\bigg)
        \end{align*}
        By Lemma~\ref{lem:simplex}, $\mathbb{E}_{\mathcal{D}}[{\overline{P}_{F}}]$ is a simplex vector, so its terms sum to $1$. Thus, $\|\mathbb{E}_{\mathcal{D}}[{\overline{P}_{F}}]\| \leq 1$. So,
        \begin{align*}
            \Pr(q(t, F) \geq q_{\mathrm{min}}) &\leq \frac{\|\bm{t}\|}{q_\mathrm{min}} \bigg( \frac{\bm{t}^{\top} \mathbb{E}_{\mathcal{D}}[\overline{P}_{F}]}{\|\bm{t}\| \|\mathbb{E}_{\mathcal{D}}[{\overline{P}_{F}}]\|}\bigg) \\
                                               &= \frac{\|\bm{t}\|}{q_\mathrm{min}} \cos \bigg( \arccos \bigg( \frac{\bm{t}^{\top} \mathbb{E}_{\mathcal{D}}[\overline{P}_{F}]}{\|\bm{t}\| \|\mathbb{E}_{\mathcal{D}}[{\overline{P}_{F}}]\|}\bigg) \bigg)
        \end{align*}
        By the definition of target divergence, we have
        \[
          \Pr(q(t, F) \geq q_{\mathrm{min}}) \leq \frac{\|\bm{t}\| \cos(\theta)}{q_\mathrm{min}}.
        \]
    \end{proof}
    
    \fofir*
    \begin{proof}
        We seek to bound the proportion of successful search problems for which $q(t, f) \geq q_{\mathrm{min}}$ for any threshold $q_{\mathrm{min}} \in (0, 1]$. Let $F \sim \mathcal{U}[\mathcal{B}]$. Then, 
        \begin{align*}
            \frac{|\mathcal{B}_{q_{\mathrm{min}}}|}{|\mathcal{B}|} &= \frac{1}{ |\mathcal{B}|} \sum_{f \in \mathcal{B}} \mathbbm{1}_{q(t,f) \geq q_{\mathrm{min}}}\\
                                               &=  \mathbb{E}_{\mathcal{U}[\mathcal{B}]}[\mathbbm{1}_{q(t,F) \geq q_{\mathrm{min}}}] \\
                                               &= \Pr(q(t, F) \geq q_{\mathrm{min}}).
        \end{align*}
        Let $\omega \in t$ mean the target function $\bm{t}$ evaluated at $\omega$ is one. Then, by applying Lemma~\ref{lem:lemone},
        \begin{align*}
            \frac{|\mathcal{B}_{q_{\mathrm{min}}}|}{|\mathcal{B}|} &= \Pr(\overline{P}(\omega \in t|F) \geq q_{\mathrm{min}}) \\
                                               &= \Pr(\bm{t}^{\top} \overline{P}_{F} \geq q_\mathrm{min}).
        \end{align*}
        Applying Markov's Inequality and by the definition of $\bias(\mathcal{B}, \bm{t})$,
        \begin{align*}
            \frac{|\mathcal{B}_{q_{\mathrm{min}}}|}{|\mathcal{B}|} &\leq \frac{\mathbb{E}_{\mathcal{U}[\mathcal{B}]} [\bm{t}^{\top} \overline{P}_{F}]}{q_{\mathrm{min}}} \\
                                               &= \frac{p + \bias(\mathcal{B},\bm{t})}{q_{\mathrm{min}}}.
        \end{align*}
    \end{proof}
    
    \reducedproportion*
    \begin{proof}
        This result follows directly from Theorem~\ref{thm:fofir}.
    \end{proof}
    
    \futility*
    \begin{proof}
        Let $\mathcal{F}$ be the space of possible information resources. Then,
        \begin{align*}
            \Pr(\omega \in t; \mathcal{A}) 
                &= \int_\mathcal{F} \Pr(\omega \in t, f; \mathcal{A}) \dif f\\
                &= \int_\mathcal{F} \Pr(\omega \in t \mid f; \mathcal{A})\Pr(f) \dif f.
        \end{align*}
        Since we are considering the per-query probability of success for algorithm $\mathcal{A}$ on $t$ using information resource $f$, we have
        \[
          \Pr(\omega \in t \mid f; \mathcal{A}) = \overline{P}(\omega \in t \mid f).
        \]
        Also note that $\Pr(f) = \mathcal{D}(f)$ by the fact that $F \sim \mathcal{D}$. Making these substitutions, we obtain
        \begin{align*}
            \Pr(\omega \in t; \mathcal{A}) 
                &= \int_\mathcal{F} \overline{P}(\omega \in t \mid f)\mathcal{D}(f) \dif f\\
                &= \mathbb{E}_{\mathcal{D}}\left[\overline{P}(\omega \in t \mid F)\right]\\
                &= \mathbb{E}_{\mathcal{D}}\left[\mathbf{t}^{\top}\overline{P}_F\right]\\
                &= \bias(\mathcal{D}, \bm{t}) + p\\
                &= p.
        \end{align*}
    \end{proof}
    
    \foat*
    \begin{proof}
        First, note that the size of $\tau_k$ is equivalent to the number of $k$-sized subsets of a $|\mathrm{\Omega}|$-size set, $\binom{|\mathrm{\Omega}|}{k}$. The size of $\tau_{q_\mathrm{min}}$ is the number of target sets in $\tau_k$ for which $\bias(\mathcal{D}, \bm{t}) \geq q_\mathrm{min}$. Let $F \sim \mathcal{D}$ and $T \sim \mathcal{U}[\tau_k]$. Then,
        \begin{align*}
            |\tau_{q_\mathrm{min}}| 
                &= \sum_{\bm{t} \in \tau_k} \mathbbm{1}_{\bias{(\mathcal{D}, \bm{t}) \geq q_\mathrm{min}}} \\
                &= \binom{|\mathrm{\Omega}|}{k} \sum_{t \in \tau_k} \binom{|\mathrm{\Omega}|}{k}^{-1} \mathbbm{1}_{\bias{(\mathcal{D}, \bm{t}) \geq q_\mathrm{min}}} \\
                &= \binom{|\mathrm{\Omega}|}{k} \mathbb{E}_{\mathcal{U}[\tau_k]}[\mathbbm{1}_{\bias(\mathcal{D}, T) \geq q_\mathrm{min}}] \\
                &= \binom{|\mathrm{\Omega}|}{k} \Pr(\bias(\mathcal{D}, T) \geq q_\mathrm{min}) \\
                &= \binom{|\mathrm{\Omega}|}{k} \Pr(p + \bias(\mathcal{D}, T) \geq p + q_\mathrm{min}) \\
                &= \binom{|\mathrm{\Omega}|}{k} \Pr(\mathbb{E}_\mathcal{D}[T^\top \overline{P}_F] \geq p + q_\mathrm{min}).
        \end{align*}
        Applying Markov's Inequality,
        \begin{align*}
            |\tau_{q_\mathrm{min}}|
                &\leq \frac{\binom{|\mathrm{\Omega}|}{k} \mathbb{E}_{\mathcal{U}[\tau_k]}[\mathbb{E}_\mathcal{D}[T^\top \overline{P}_F]]}{p + q_\mathrm{min}} \\
                &= \frac{\binom{|\mathrm{\Omega}|}{k} \sum_{\bm{t} \in \tau_k}\binom{|\mathrm{\Omega}|}{k}^{-1} \mathbb{E}_\mathcal{D}[\bm{t}^\top \overline{P}_F]}{p + q_\mathrm{min}} \\
                &= \frac{\mathbb{E}_\mathcal{D}[\overline{P}_F^\top \sum_{\bm{t} \in \tau_k} \bm{t}]}{p + q_\mathrm{min}} \\
                &= \frac{\mathbb{E}_\mathcal{D}[\overline{P}_F^\top \mathbf{1} \binom{|\mathrm{\Omega}|-1}{k-1}]}{p + q_\mathrm{min}} \\
                &= \frac{\binom{|\mathrm{\Omega}|-1}{k-1}\mathbb{E}_\mathcal{D}[\overline{P}_F^\top \mathbf{1}]}{p + q_\mathrm{min}} \\
                &= \frac{\binom{|\mathrm{\Omega}|-1}{k-1}}{p + q_\mathrm{min}}.
        \end{align*}
        Thus,
        \begin{align*}
            \frac{|\tau_{q_\mathrm{min}}|}{|\tau_k|} 
                &\leq \frac{\binom{|\mathrm{\Omega}| - 1}{k-1}}{\binom{|\mathrm{\Omega}|}{k}(p + q_\mathrm{min})} \\
                &= \frac{\binom{|\mathrm{\Omega}| - 1}{k-1}}{\frac{|\mathrm{\Omega}|}{k}\binom{|\mathrm{\Omega}|-1}{k-1}(p + q_\mathrm{min})} \\
                &= \frac{p}{p+q_{\mathrm{min}}} \\
                &\leq \frac{p}{q_\mathrm{min}}.
        \end{align*}
    \end{proof}
    
    \fofbd*
    \begin{proof}
        Let $D \sim \mathcal{U}[\mathcal{P}]$. Then,
        \begin{align*}
            \frac{\mu(\mathcal{G}_{t, q_{\mathrm{min}}})}{\mu(\mathcal{P})}
            &= \Pr(\bias(D, \bm{t}) \geq q_\mathrm{min}) \\
            &= \Pr(p + \bias(D, \bm{t}) \geq p + q_\mathrm{min}) \\
            &= \Pr( \mathbb{E}_D[\bm{t}^\top \overline{P}_F] \geq p + q_\mathrm{min}).
        \end{align*}
        Applying Markov's inequality and Lemma~\ref{lem:equivalencebias},
        \begin{align*}
            \frac{\mu(\mathcal{G}_{\bm{t}, q_{\mathrm{min}}})}{\mu(\mathcal{P})}
            &\leq \frac{\mathbb{E}_{\mathcal{U}[\mathcal{P}]}[\mathbb{E}_D[\bm{t}^\top \overline{P}_F]]}{p + q_\mathrm{min}} \\
            &= \frac{p + \mathbb{E}_{\mathcal{U}[\mathcal{P}]}[\bias(D, \bm{t})]}{p + q_{\mathrm{min}}} \\
            &= \frac{p + \bias(\mathcal{B}, \bm{t})}{p + q_\mathrm{min}} \\
            &\leq \frac{p + \bias(\mathcal{B}, \bm{t})}{q_\mathrm{min}}.
        \end{align*}
    \end{proof}
    
    \density*
    \begin{proof}
        \begin{align*}
            \int_{\mathcal{P}} \bias(\mathcal{D}, \bm{t}) \dif\mathcal{D}
                &= C\int_{\mathcal{P}} \frac{1}{C} \bias(\mathcal{D}, \bm{t}) \dif\mathcal{D} \\
                &= C \cdot \mathbbm{E}_{\mathcal{U}[\mathcal{P}]} [\bias(D, \bm{t})] 
        \end{align*}
        By Lemma~\ref{lem:equivalencebias},
        \begin{align*}
            \int_{\mathcal{P}} \bias(\mathcal{D}, \bm{t}) \dif\mathcal{D}
                &= C \cdot \bias(\mathcal{B}, \bm{t}).
        \end{align*}
    \end{proof}
    
    \conservationdistributions*
    \begin{proof}
        By Theorem~\ref{thm:consbias},
        \begin{align*}
            \sum_{\bm{t} \in \tau_{k}} \int_{\mathcal{P}} \bias(\mathcal{D}, \bm{t}) \dif\mathcal{D} &= \int_{\mathcal{P}} \bigg( \sum_{t \in \tau_{k}} \bias(\mathcal{D}, \bm{t}) \bigg) \dif\mathcal{D} \\
            &= \int_{\mathcal{P}} 0 \dif\mathcal{D} \\
            &= 0.
        \end{align*}
    \end{proof}

\end{document}